\newtheorem{theorem}{Theorem}
\newtheorem{corollary}{Corollary}[theorem]
\newtheorem{definition}{Definition}
\newtheorem{remark}{Remark}
\begin{document}

\title{Efficient and Consistent Bundle Adjustment on Lidar Point Clouds}

\author{Zheng Liu, Xiyuan Liu and Fu Zhang
\thanks{Manuscript received September 15, 2022, revised January 20, 2023 and May 2, 2023, accepted August 28, 2023. This work is supported in part by the University Grants Committee of Hong Kong General Research Fund (project number 17206421) and in part by DJI under the grant number 200009538. (\textit{Corresponding author: Fu Zhang.})

The authors are with the Mechatronics and Robotic Systems (MaRS) Laboratory, Department of Mechanical Engineering, University of Hong Kong, Hong Kong SAR, China (e-mail: u3007335@connect.hku.hk; xliuaa@connect.hku.hk; fuzhang@hku.hk).
}}

% \markboth{{IEEE TRANSACTIONS ON ROBOTICS}}
% {Liu \MakeLowercase{\textit{et al.}}: Efficient and Consistent Bundle Adjustment on Lidar Point Clouds}

% make the title area
\maketitle
\begin{abstract}
Simultaneous determination of sensor poses and scene geometry is a fundamental problem for robot vision that is often achieved by Bundle Adjustment (BA). This paper presents an efficient and consistent bundle adjustment method for lidar sensors. The method employs edge and plane features to represent the scene geometry, and directly minimizes the natural Euclidean distance from each raw point to the respective geometry feature. A nice property of this formulation is that the geometry features can be analytically solved, drastically reducing the dimension of the numerical optimization. To represent and solve the resultant optimization problem more efficiently, this paper then {adopts and formalizes the concept of} \textit{point cluster}, which encodes all raw points associated to the same feature by a compact set of parameters, the \textit{point cluster coordinates}. We derive the closed-form derivatives, up to the second order, of the BA optimization based on the point cluster coordinates and show their theoretical properties such as the null spaces and sparsity. Based on these theoretical results, this paper develops an efficient second-order BA solver. Besides estimating the lidar poses, the solver also exploits the second order information to estimate the pose uncertainty caused by measurement noises, leading to consistent estimates of lidar poses. Moreover, thanks to the use of point cluster, the developed solver fundamentally avoids the enumeration of each raw point in all steps of the optimization: cost evaluation,  derivatives evaluation and uncertainty evaluation. \addtocounter{footnote}{1} The implementation of our method is open sourced to benefit the robotics community\footnote{\url{https://github.com/hku-mars/BALM}}.

% The proposed method is extensively evaluated at different levels: consistency, accuracy, and computation efficiency in both simulated and actual environments. Benchmark evaluation on 19 real-world open sequences covering various datasets (Hilti, NTU-VIRAL and UrbanLoco), environments (campus, urban streets, offices, laboratory, and construction sites), lidar types (Ouster OS0-64, Ouster OS1-16, Velodyne HDL 32E), and motion types (handheld, UAV-based, and ground vehicles-based) shows that our method achieves consistently and significantly higher performance than other state-of-the-art counterparts in terms of localization accuracy, mapping quality, and computation efficiency. In particular, our method achieves a mapping accuracy at a level of the lidar measurement noise (i.e., a few centimeters) while processing all sequences in less than half minute on a standard desktop CPU. Finally, we show how our proposed method effectively improves the accuracy and/or computation efficiency of some important robotic techniques, including lidar-inertial odometry, multi-lidar extrinsic calibration, and high-accuracy global mapping. The implementation of our method is open sourced to benefit the robotics community and beyond\footnote{\url{https://github.com/hku-mars/BALM}}.
\end{abstract}
% \vspace{-0.1cm}
\begin{IEEEkeywords}
Bundle adjustment, lidar SLAM.
\end{IEEEkeywords}

\IEEEpeerreviewmaketitle
% \vspace{-0.5cm}
\section{Introduction}

%In recent years, simultaneous localization and mapping (SLAM) based on light detection and ranging (lidar) sensors has drawn an increasing amount of research interests \cite{lin2020loam, shan2020lio, li2021towards, xu2021fast, wang2021lightweight, jiao2021robust, park2021elasticity, xu2022fast}. Enabled by the direct, dense, active and accurate (DDAA) depth measurements of lidar sensors, lidar SLAM has the ability to build a dense and accurate 3D map of the environment in real-time and at relatively low computation cost. These unique advantages have made lidar SLAM an essential technique in applications requiring real-time, dense, and accurate 3D mapping of the environment, such as autonomous driving \cite{thrun2006stanley, urmson2008autonomous, levinson2011towards, li2020lidar}, unmanned aerial vehicles navigation \cite{gao2019flying, kong2021avoiding, ren2022bubble}, and real-time mobile mapping \cite{schwarz2010mapping, bosse2012zebedee, helmberger2022hilti}.  This trend becomes more evident with recent developments in lidar technologies which have enabled the commercialization and mass production of lightweight and high-performance solid-state lidars at a significantly lower cost \cite{wang2020mems, liu2021low}. 

\IEEEPARstart{L}{ight} detection and ranging (lidar) has become an essential sensing technology for robots to achieve a high level of autonomy \cite{thrun2006stanley, urmson2008autonomous}. Enabled by the direct, dense, active and accurate (DDAA) depth measurements, lidar sensors have the ability to build a dense and accurate 3D map of the environment in real-time and at a relatively low computation cost. These unique advantages have made lidar sensors essential to a variety of applications that require real-time, dense, and accurate 3D mapping of the environment, such as autonomous driving \cite{levinson2011towards, li2020lidar}, unmanned aerial vehicles navigation \cite{gao2019flying, kong2021avoiding, ren2022bubble}, and real-time mobile mapping \cite{schwarz2010mapping, bosse2012zebedee, helmberger2022hilti}.  This trend becomes even more evident with recent developments in lidar technologies which have enabled the commercialization and mass production of lightweight and high-performance solid-state lidars at a significantly lower cost \cite{wang2020mems, liu2021low}.

%been widely applied in simultaneous localization and mapping (SLAM), because of the ability to measure ranges directly \cite{schwarz2010mapping}. lidar SLAM has shown great potential in autonomous driving \cite{li2020lidar}, unmanned drone navigation \cite{kong2021avoiding} and robot vacuum cleaner \cite{hess2016real}. Except conventional mechanical spinning lidar, solid lidar has drawn much more attention recently due to its large-scale production, light-weight, lower price and high performance \cite{zhang2018comparison, wang2020mems, liu2021low}. Some lidar SLAM has been adapted to solid lidar and achieve good results \cite{lin2020loam, xu2021fast, xu2022fast, li2021towards}.

\iffalse
\begin{figure} [t]
	\centering
	\includegraphics[width=8.5cm]{figures/sparse.png}
	\caption{The point cloud in one lidar scan (Velodyne HDL 32E) \cite{wen2020urbanloco}. (a) Top view of lidar scan. (b) View of angle from white arrow in (a). (c) Zooming in from white frame in (b) and the white vehicle is the accumulated point cloud. Only several scanning line is on the vehicle and thus it is impossible to get accurate corresponding corner points between different lidar scans.}
	\label{fig sparse}
\end{figure}
\fi

The central task of many lidar-based techniques, such as lidar-based odometry, simultaneous localization and mapping (SLAM), and multi-lidar calibration,  is to register multiple point clouds, each measured by the lidar at different poses, into a consistent global point cloud map. However, the predominant point cloud registration methods, such as iterative closest point (ICP) \cite{besl1992method} and its variants (e.g, generalized-ICP \cite{segal2009generalized}), normal distribution transformation (NDT) \cite{biber2003normal, magnusson2009three}, and surfel registration \cite{behley2018efficient}, allow registration of two point clouds only. Such a pairwise registration leads to an incremental scan registration process for an odometry system (e.g., \cite{zhang2014loam, xu2022fast, yokozuka2021litamin2, behley2018efficient}), which would rapidly accumulate drift, or a repeated pairwise registration process for 3D mapping \cite{surmann2003autonomous} or multi-lidar calibration \cite{liu2021calib}, which would bring dramatic computation cost. All these necessitate an efficient concurrent multiple scan registration technique. %of multiple lidar scans is a fundamental technique to many applications, such as low-drift odometry based on sliding window optimization \cite{liu2021balm}, high-accuracy mapping, and multi-lidar calibration \cite{liu2022calib}. %crucially important to improve the mapping accuracy \cite{liu2022calib} and odometry drfit \cite{liu2021balm}. Simultaneously registering all lidar scans such as XXX based on (generalized-)ICP, XXX based on NDT, and XXX based on surfel registration, where points in each new scan are registered to past points in the map and then merged to the map for the registration of the next scan. Such an incremental mapping process will.  

Concurrent multiple lidar scan registration requires determining all lidar poses and the scene geometry simultaneously, a process referred to as {\it bundle adjustment (BA)} in computer vision. Compared to visual BA, which has been well-established in photogrammetry and played a fundamental role in various vital applications, including visual odometry (VO) \cite{klein2007parallel, mur2015orb, mur2017orb}, visual-inertial odometry (VIO) \cite{qin2018vins, campos2021orb}, 3D visual reconstruction \cite{schoenberger2016mvs, moulon2016openmvg} and multi-camera calibration \cite{li2013multiple, zaharescu2006multiple}, lidar BA has a similarly fundamental role but is much less mature due to two major challenges. First, lidar has a long measuring range but low resolution between scanning lines. The measured point cloud are sparsely (sometimes even not repeatedly \cite{liu2021low}) distributed in a large 3D space, making it difficult (almost impossible) to scan the same point feature in the space across different scans. This has fundamentally prevented the use of straightforward visual bundle adjustment formulation, which is largely based on point features benefiting from the high-resolution images accurately capturing individual point features. The second challenge lies in the large number of raw points (from tens of thousands to million points) collected by a practical lidar sensors. Processing all these points in the lidar BA is extremely computation intensive. 

In this work, we propose an efficient and consistent BA framework specifically designed for lidar point clouds. The framework follows our previous work BALM \cite{liu2021balm}, which formulates the lidar BA problem based on edge and plane features that are abundant in lidar scans. The BA formulation naturally minimizes the straightforward Euclidean distance of each point in a scan to the corresponding edge or plane, while the decision variables include the lidar poses and feature (edge and plane) parameters. Furthermore, it is shown that the geometry parameters (i.e., edge and plane) can be solved analytically, leading to an optimization that depends on the lidar poses only. Since the number of geometry features is often large, elimination of these geometry features from the optimization will drastically reduce the optimization dimension (hence time).
	
{A key concept our proposed BA framework adopts and formalizes is} the \textit{point cluster} \cite{ferrer2019eigen, zhou2020efficient, huang2021bundle}, which summarizes all points of a lidar scan associated to one feature by a compact set of parameters, \textit{point cluster coordinates}.
Based on the point cluster, we derive the closed-form derivatives (up to second order) of the BA optimization with respect to (w.r.t.) its decision variables (i.e., lidar poses). We prove that the formulated BA optimization and the closed-form derivatives can both be represented fully by the point cluster without enumerating the large number of individual points in a lidar scan. The removal of dependence on individual raw points drastically speeds up the evaluation of the cost function and derivatives, which further enables us to develop an efficient and consistent second-order solver, BALM2.0, which is also released on Github to benefit the community. Our experiment video is available on website\footnote{\url{https://youtu.be/MDrIAyhQ-9E}}.

We conduct extensive evaluations on the proposed BA method. Simulation study shows that the BA method produces consistent lidar pose estimate. Exhaustive benchmark comparison on 19 real-world open sequences shows that the BA method produces consistently higher performance (pose estimation accuracy, mapping accuracy, and computation efficiency) than other counterparts. We finally integrate the BA method in three vital lidar applications: lidar-inertial odometry, multi-lidar calibration, and global mapping, and show how their accuracy and/or computation efficiency are improved by the proposed BA. 

\vspace{-0.3cm}
\section{Related Works} \label{relateWork}

%Current lidar SLAM works \cite{zhang2014loam, shan2018lego, lin2020loam, xu2021fast, ye2019tightly, shan2020lio} mainly base on pairwise registration, aligning the newest lidar scan with previous point-cloud map. These methods have inherent limitation to accuracy and is unable to remove the accumulated drifts. It is true that loop-closure can serve as the back-end of SLAM to adjust all the poses \cite{lin2019fast, kim2018scan, kim2021scan}, but it is required to revisit past locations. Since the pose-graph optimization (PGO) is only depending on poses information rather than point-cloud, the point-cloud map may be still layered after PGO. In visual SLAM, bundle adjustment (BA) is adapted to deal with relevant problems and becomes a standard module \cite{klein2007parallel, mur2015orb, mur2017orb, qin2018vins}.

\subsection{Multi-view registration}

%Except these pair-wise ICP registration, point-cloud multi-view registration is investigated by many researchers \cite{bergevin1996towards, govindu2013averaging, pulli1999multiview, zhu2019efficient}. \cite{bergevin1996towards} built a well-balanced network of poses and used pair-wise registration repeatedly until the updating variance of poses small enough. The convergence of this method is slow and the accuracy of scan-to-scan match for lidar is poor due to its sparsity. \cite{govindu2013averaging} designed a sophisticated pose-graph optimization by using the averaging of relative motions, while 3D points are not considered in the optimization. \cite{pulli1999multiview} used the point pairs extracted from point-to-point ICP for multi-view registration. It is true that this method is efficient and memory-saving, but as mentioned above, lidar scan is sparse and can not find accurate corresponding points. \cite{zhu2019efficient} firstly segmented point cloud by K-means clustering and minimize the total distance from each point to centroid in each cluster. This approach requires K-means clustering and to recompute centroid for each iteration. Due to the large measurement range and considerable points of lidar, K-means clustering for each iteration is time-consuming. Moreover, the time complexity of transformation estimation is dependent on the number of points, which is also time-consuming for lidar SLAM. In summery, these multi-view registration method is designed for dense point-cloud and not applicable to lidar scans. 

The bundle adjustment problem is similar to the {\it multi-view registration} problem that has been previously researched \cite{bergevin1996towards, lu1997globally, pulli1999multiview, huber2003fully, borrmann2008globally, govindu2013averaging}. These methods all adopt a two layer framework: the first layer estimates the relative poses of a selected set of scan pairs using the pairwise registration methods (e.g., ICP \cite{besl1992method}); From the relative poses, the second layer constructs and solves a pose graph to obtain a maximum a posteriori estimate of all lidar poses. Such a two-layer framework decouples the raw point registration from the global pose estimation, so that each raw point registration only involves a small amount of local points contained in the two scans (instead of all scans sharing overlaps) and the pose graph optimization only involves a small amount of constraints arising from the relative poses (instead of raw points). The net effect is a significant saving of time, hence being largely used in online lidar SLAM systems \cite{shan2020lio, koide2021globally}. However, the advantage in computation efficiency comes with fundamental limitation in accuracy: the pairwise scan registration only considers the overlap among two scan at a time, while the overlap is really shared by all scans and should be registered concurrently. Moreover, the pose graph optimization only considers constraints from the relative poses, while the mapping consistency indicated by the raw points are completely ignored. Consequently, it is usually difficult to produce (or even be aware of) a globally consistent map that is necessary for high-accuracy localization and mapping tasks. 

Some early works in computer vision and computer graphics have proposed multi-view registration methods that directly optimize the mapping consistency from multiple range images, aiming for consistent surface  modeling of 3D  objects. \cite{blais1995registering} is a direct extension of the ICP method, it minimizes the Euclidean distance between a pre-known control point in one scan to all matched control points in the rest scans. Within this framework, \cite{benjemaa1998solution} uses a quaternion representation in the optimization, and \cite{neugebauer1997reconstruction} extends the distance between control points to the distance between surfaces around the respective control points. More recently, Zhu {\it et al.} \cite{zhu2019efficient} proposes a two step registration method: the first step uses a K-means clustering to cluster points from all scans, and the second step estimates the scan poses by minimizing the Euclidean distance between each point in a cluster to the centroid. Since these methods rely on point features in the scan, they require densely populated point cloud (e.g., by depth camera) for extracting such salient point features. While this is not a problem for small object reconstruction for which these methods are designed, it is not the case for scene reconstruction where the LiDAR measurements are very sparse (sometimes even non-repetitive) as explained above. 

%\subsection{Correlation-based scan registration}

\subsection{Bundle or plane adjustment}

In recent years, researchers in the robotics community have shown increasing interests to address the bundle adjustment problem on (lidar) point clouds more formally. Kaess \cite{kaess2015simultaneous} exploits the plane features in the bundle adjustment and minimizes the difference between the plane measured in a scan and the plane predicted from the optimization variables: scan poses and plane parameters. This formulation was later integrated into a key-frame-based online SLAM system \cite{hsiao2017keyframe}. Since the method minimizes the plane-to-plane distance, it requires to segment each scan and estimate the contained local planes in advance. Such plane segmentation and estimation usually require dense point clouds measured by RGB-D cameras on which the work were demonstrated.  

A more formal bundle adjustment method on lidar point cloud, termed as the {\it plannar (bundle) adjustment}, was later proposed in \cite{zhou2020efficient} which minimizes the natural Euclidean distance between each point in a scan to the plane predicted from the scan poses and plane parameters (the optimization variables). Compared with plane-to-plane distance in \cite{kaess2015simultaneous}, the point-to-plane metric is faster, more accurate, and more suitable for lidar sensors, where local plane segmentation or estimation are less reliable due to sparse point clouds. Moreover, the direct use of raw points in the point-to-plane metric could also lead to a more consistent estimate of the optimization variables by considering the measurement noises in the raw points. Then, the formulated non-linear least square problem is solved by a Levenberg-Marquardt (LM) algorithm. To lower the computation load caused by the large number of points measurements associated to the same plane feature, \cite{zhou2020efficient} propose a reduction technique to eliminate the enumeration of individual points in the evaluation of the residual and Jacobian. Furthermore, due to the very similar structure to the visual bundle adjustment, the proposed bundle adjustment is also compatible with the Schur complement trick \cite{triggs1999bundle}, which eliminates the plane parameters in each iteration of the LM algorithm. This plane adjustment method is largely used in many online lidar SLAM systems developed subsequently \cite{zhou2021pi, zhou2021lidar} or before \cite{geneva2018lips, trevor2012planar}. 

On the other hand, Ferrer \cite{ferrer2019eigen} exploits plane features similar to \cite{zhou2020efficient} and minimizes any deviation of each raw point from the plane equation. The resultant optimization cost then reduces to the minimum eigenvalue of a covariance matrix and is thus termed as the eigen-factor. The author further derived the closed-form gradient of the cost function w.r.t. to both the scan poses and plane parameters and employed a gradient-based method to solve the optimization iteratively. Due to the second order nature of the eigenvalue (as confirmed in \cite{liu2021balm}, see below), the gradient method converges very slowly (requiring a few hundreds of iterations) \cite{ferrer2019eigen, zhou2021lidar}. 

Our previous work BALM \cite{liu2021balm} takes another step towards more efficient bundle adjustment. Similar to \cite{zhou2020efficient}, BALM minimizes the natural Euclidean distance between each point in a scan to the plane (i.e., point-to-plane metric). Based on this cost metric, BALM proved that all plane parameters can be analytically solved with closed-form solutions in advance, hence the large number of plane parameters can be completely removed from the resultant optimization. {Such an elimination of plane parameters is analogous to the well-studied separable least-squares problem \cite{strelow2012general, golub1973differentiation, wiberg1976computation, ruhe1980algorithms}  in general, but is specifically designed for the LiDAR BA problem. A prominent advantage of the feature elimination is the significant reduction of optimization dimension, which} poses a fundamental difference from all the previous plannar adjustment methods {\cite{geneva2018lips, zhou2020efficient, zhou2021pi, zhou2021lidar}} and visual bundle adjustment methods \cite{triggs1999bundle}. The feature elimination also removed the various issues caused by plane representation in the optimization, such as the normal constraints in the Hesse normal representation $(\mathbf n, d)$  \cite{zhou2020efficient}, singularity issue in the closest-point (CP) representation $\mathbf n d$ \cite{zhou2021pi, zhou2021lidar,geneva2018lips} and over-parameterization issue in the quaternion representation \cite{kaess2015simultaneous}. {With the feature elimination, BALM further proved that the point-to-plane (or edge) distance is essentially the eigenvalues of the covariance matrix used in \cite{ferrer2019eigen}, thus unifying the two metrics in \cite{zhou2020efficient} and \cite{ferrer2019eigen}. While both BALM and \cite{ferrer2019eigen} eliminate the feature from the BA optimization, \cite{ferrer2019eigen} uses a gradient method to solve this optimization, which leads to very slow convergence as reviewed above. In contrast}, BALM \cite{liu2021balm} derived the second order derivatives of the cost function and developed a LM-like second-order solver. {The developed solver requires significantly less iterations to converge}, achieving real-time sliding window optimization when integrated to LOAM \cite{zhang2014loam}. A further advantage of BALM against previous method  \cite{geneva2018lips, ferrer2019eigen, zhou2020efficient} is that the whole framework is naturally extendable to edge features besides plane features. 

A major drawback of the BALM \cite{liu2021balm} is that the evaluation of the second-order derivatives including Jacobian and Hessian requires to enumerate each individual lidar point, leading to a computational complexity of $O(N^2)$ where $N$ is the number of points  \cite{zhou2021lidar}. Consequently, the method is hard to be used in large-scale problems where the lidar points are huge in number. This problem is partially addressed in \cite{huang2021bundle}, which aggregates all points associated to the same plane feature in a scan in the scan local frame. However, to ensure convergence, \cite{huang2021bundle} modifies the cost function by including an extra heuristic penalty term, which is not a true representation of the map consistency. Moreover, the cost function in \cite{huang2021bundle} still involves the plane feature similar to \cite{geneva2018lips, ferrer2019eigen, zhou2020efficient, zhou2021pi, zhou2021lidar}. To lower the computation load caused by optimizing the large number of feature parameters, the method further fixes the feature parameters in the optimisation, which could slow down the optimisation speed.

Our BA formulation in this paper is based on BALM \cite{liu2021balm}, hence inheriting the fundamental feature elimination advantage when compared to {\cite{geneva2018lips, zhou2020efficient, zhou2021pi, zhou2021lidar} and the fast convergence advantage when compared to \cite{ferrer2019eigen}}. To address the computational complexity of $O(N^2)$ in BALM, we {adopt and formalize the concept of} {\it point cluster}, which fundamentally eliminates the enumeration of each individual point in the evaluation of the cost function, Jacobian and Hessian matrix. Consequently, the computational complexity is irrelevant to both the feature dimension (similar to BALM \cite{liu2021balm}) and the point number (similar to \cite{zhou2020efficient, zhou2021pi}). The point cluster in our method is similar to the point aggregation used in \cite{huang2021bundle} {(and also used in \cite{ferrer2019eigen, zhou2020efficient})}, but the overall BA formulation is fundamentally different: (1) it minimizes the true map consistency (the point-to-plane distance) without trading off with any other heuristic penalty; and (2) it performs exact feature elimination with rigorous proof instead of empirical fixation. Based on these nice theoretical results, we develop an efficient second-order solver, termed as BALM2.0. Besides solving the nominal lidar poses, the solver also estimates the uncertainty of the estimated lidar pose by leveraging the second-order derivative information, which is another new contribution compared with existing works.

%Besides the formulation, previous BA methods also differ significantly in feature extraction. Except \cite{zhou2020efficient, ferrer2019eigen}, which focus on the BA formulation without any consideration of the plane feature extraction or matching, previous methods segment planes in a local scan by methods like \cite{holz2011real} (e.g., \cite{kaess2015simultaneous, hsiao2017keyframe}), RANSAC (e.g., \cite{geneva2018lips}), or based on region growing \cite{poppinga2008fast} (e.g., \cite{zhou2021pi, zhou2021lidar}). Our previous work BALM \cite{liu2021balm} proposes an adaptive voxelization method, which is able to extract plane (and edge) features at different scale, from large planes (e.g. wall, the ground) to small cluttered plannar patches (e.g. on tree crowns, trunk). Due to the versatility and adaptability to different environments, this method has been adopted by many following-up applications, including BA \cite{huang2021bundle}, odometry \cite{yuan2022efficient}, and multi-lidar calibration \cite{liu2022calib}, and proved very effective in various environments, such as outdoor urban environments, cluttered indoor environments, and unstructured woods environments. Our method in this work utilises the same adaptive voxelization in \cite{liu2021balm}, but further incorporates a new bottom-up merging process, which is able to combine multiple smaller features into large ones if possible. In this way, the method is able to further lower the amount of features in the adaptive voxelization without degrading the environment adaptability.

\vspace{-0.1cm}
\section{Bundle Adjustment Formulation and Optimization}\label{bundle_adjustment_formulation}

\begin{table}[t]
	\caption{Nomenclatures} \label{table:Notation}
	\centering
	{\begin{tabular}{rl}
		\toprule
		Notation & \qquad\qquad\qquad\qquad Explanation \\
		\midrule
		$\mathbb R^{m \times n}$ & The set of $m \times n$ real matrices. \\
		$\mathbb S^{m \times m}$ & The set of $m \times m$ symmetric matrices. \\
		$\boxplus$  & The encapsulated ``boxplus" operations on manifold.  \\
		$(\cdot)_f$  & The value of $(\cdot)$ expressed in lidar local frame, \\
		$(\cdot)$ & The value of $(\cdot)$ expressed in global frame. \\
		$\lfloor \cdot \rfloor$  & The skew symmetric matrix of $(\cdot)$. \\
		$\exp(\cdot)$  & Exponential of $(\cdot)$, which could be a matrix. \\
		$\mathbbm{1}_{i=j}$  & Indicator function which is equal to ``$1$" if $i=j$, \\
		& otherwise equal to ``$0$".  \\
		$M_f, M_p$ & The number of features and poses, respectively. \\
		$i,j,k$  & The indexes of features, poses and points, respectively. \\
		$l$ & The index of eigenvalue and eigenvector of a matrix. \\
		$\text{p,q}$  & The indexes of (block) row and column in a matrix. \\
		$\mathbf e_l$ & The vector in $\mathbb R^4$ with all elements being zeros except \\ 
		& the $l$-th element being one ($l\in \{1,2,3,4\}$). \\
		$\mathbf S_{\text{\bf P}}$ & $\mathbf S_{\text{\bf P}} = \begin{bmatrix}
		    \mathbf I_{3 \times 3} & \mathbf 0_{3 \times 1}
		\end{bmatrix} \in \mathbb{R}^{3 \times 4}$. \\
		$\mathbf S_{\text{\bf v}}$ & ${\mathbf S_{\text{\bf v}}} = \begin{bmatrix}
		    \mathbf 0_{1 \times 3} & 1
		\end{bmatrix} \in \mathbb{R}^{1 \times 4}$. \\
		$\mathbf E_{kl}$ & $\mathbf E_{kl} = \mathbf e_k \mathbf e_l^T + \mathbf e_l \mathbf e_k^T \in \mathbb{S}^{4 \times 4}, \ k,l \in \{1,2,3,4\}$. \\
		\bottomrule 
	\end{tabular}}
\end{table}

In this chapter, we derive our BA formulation and optimisation. First, following \cite{liu2021balm}, we formulate the BA as minimizing the the point-to-plane (or point-to-edge) distance (Sec. \ref{BAformulation}) and show that the feature parameters can be eliminated from the formulated optimisation (Sec. \ref{sec:feature_elimination}). Then, we introduce the point cluster in Sec. \ref{PointCluster}, based on which the first and second order derivatives are derived in Sec. \ref{derivative}.  Based on these theoretical results, we present our second-order solver in Sec. \ref{second-order-solver}. 
% Then the time complexity of the proposed BA is analyzed in the Sec. \ref{sec:time_analysis}. 
Finally, in Sec. \ref{covariance}, we show how to estimate the uncertainty of the BA solution. Throughout this paper, we use notations summarized in Table \ref{table:Notation} or otherwise specified in the context.

\subsection{BA formulation}\label{BAformulation}

\begin{figure} [t]
	\centering
	\includegraphics[width=0.7\linewidth]{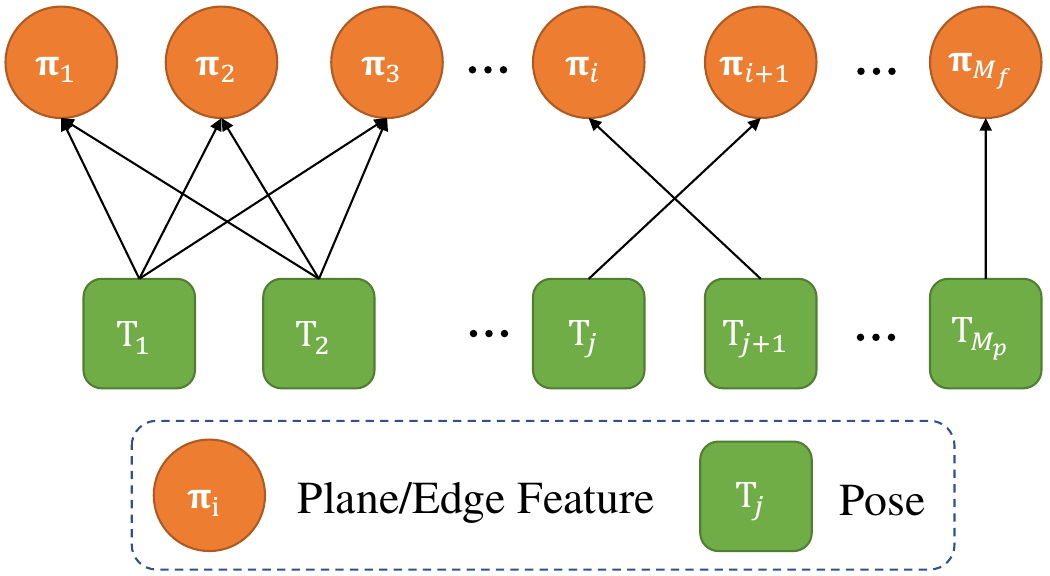}
	\caption{Factor graph representation of the bundle adjustment formulation.}
	\label{fig-BAfactor}
\end{figure}

\begin{figure} [t]
	\centering
	\includegraphics[width=1\linewidth]{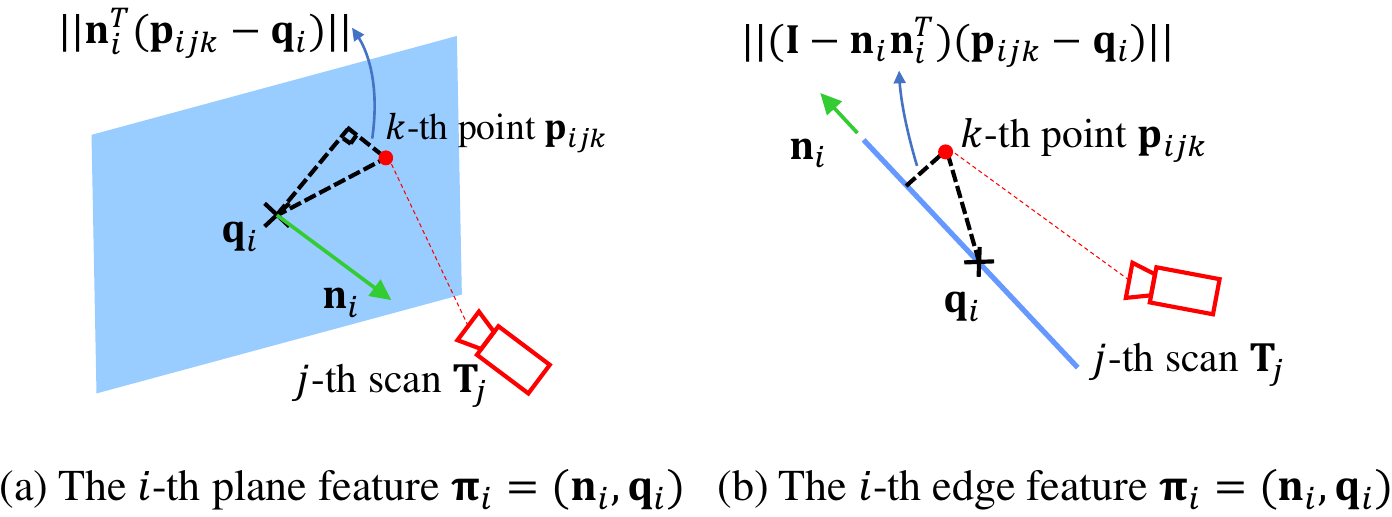}
	\caption{Plane and edge features used in the lidar BA. (a) The plane formulation. $\mathbf q_i$ is a point in the plane and $\mathbf n_i$ is the plane normal. (b) The line formulation. $\mathbf q_i$ is a point on the edge and $\mathbf n_i$ is the edge direction.}
	\label{fig-formulation}
\end{figure}

Shown in Fig. \ref{fig-BAfactor}, assume there are $M_f$ features, each denoted by parameter $\boldsymbol{\pi}_i$ ($i=1,...,M_f$), observed by $M_p$ lidar poses, each denoted by $\mathbf T_j = (\mathbf R_j, \mathbf t_j)$ ($j=1,...,M_p$), the bundle adjustment refers to simultaneously determining all the lidar poses (denoted by $\mathbf T = (\mathbf T_1, \cdots, \mathbf T_{M_p})$) and feature parameters (denoted by $\boldsymbol{\pi} = (\boldsymbol{\pi}_1, \cdots, \boldsymbol{\pi}_{M_f})$), such that reconstructed map agrees with the lidar measurements to the best extent. Denote $c(\boldsymbol{\pi}_i, \mathbf T)$ the map consistency due to the $i$-th feature, a straightforward BA formulation is 

\begin{align}
    \min_{\mathbf T, \boldsymbol{\pi}} 
	\Big(\sum\nolimits_{i=1}^{M_f} c (\boldsymbol{\pi}_i, \mathbf T)
	\Big). \label{BA-formulation}
\end{align}

In our BA formulation, we make use of plane and edge features that are often abundant in lidar point cloud and minimize the natural Euclidean distance between each measured raw lidar point and its corresponding plane or edge feature. Specifically, assume a total number of $N_{ij}$ lidar points are measured on the $i$-th feature at the $j$-th lidar pose, each denoted by $\mathbf p_{f_{ijk}}$ ($k=1,...,N_{ij}$). Its predicted location in the global frame is
\begin{align}\label{point_transform}
    \mathbf p_{ijk} = \mathbf R_j \mathbf p_{f_{ijk}} + \mathbf t_j.
\end{align}

For a plane feature, it is parameterized by $\boldsymbol{\pi}_i = (\mathbf n_i, \mathbf q_i)$ with $\mathbf n_i$ the plane normal vector and $\mathbf q_i$ an arbitrary point on the plane, both in the global frame (see Fig. \ref{fig-formulation} (a)). Then, the Euclidean distance between a measured point $ \mathbf p_{f_{ijk}} $ to the plane is $\| \mathbf n_i^T (\mathbf p_{ijk}- \mathbf q_i) \|_2$. Aggregating the distance for all points observed in all poses leads to the total map consistency corresponding to this plane feature: 
\begin{align}\label{plane_feature_cost}
  c (\boldsymbol{\pi}_i, \mathbf T) = \frac{1}{N_i} \sum_{j=1}^{M_p}\sum_{k=1}^{N_{ij}} 
\left \| \mathbf n_i^T (\mathbf p_{ijk}- \mathbf q_i) \right \|_2^2
\end{align}
where $N_i = \sum_{j = 1}^{M_p} N_{ij}$ is the total number of lidar points observed on the plane feature by all poses. 

For an edge feature, it is parameterized by $\boldsymbol{\pi}_i = (\mathbf n_i, \mathbf q_i)$ with $\mathbf n_i$ the edge direction vector and $\mathbf q_i$ an arbitrary point on the edge, both in the global frame (see Fig. \ref{fig-formulation} (b)). Then, the Euclidean distance between a measured point $ \mathbf p_{f_{ijk}} $ to the edge is $\left \| (\mathbf I - \mathbf n_i\mathbf n_i^T) (\mathbf p_{ijk}- \mathbf q_i) \right \|_2$. Aggregating the distance for all points observed in all poses leads to the total map consistency corresponding to this edge feature: 
\begin{align}\label{edge_feature_cost}
  c (\boldsymbol{\pi}_i, \mathbf T) = \frac{1}{N_i} \sum_{j=1}^{M_p}\sum_{k=1}^{N_{ij}} 
\left \| (\mathbf I - \mathbf n_i\mathbf n_i^T) (\mathbf p_{ijk}- \mathbf q_i) \right \|_2^2
\end{align}
where $N_i = \sum_{j = 1}^{M_p} N_{ij}$ is the total number of lidar points observed on the edge feature by all poses.

\subsection{Elimination of feature parameters}\label{sec:feature_elimination}

In this section, we show that in the BA optimization (\ref{BA-formulation}), the feature parameter $\boldsymbol{\pi}$ can really be solved with a closed-form solution. The key observation is that one cost item $c(\boldsymbol{\pi}_i, \mathbf T)$ depends solely on one feature parameter, so that the feature parameter can be optimized independently. Concretely,

\begin{align}
    \min_{\mathbf T, \boldsymbol{\pi}} 
	\Big(\sum_{i=1}^{M_f} c (\boldsymbol{\pi}_i, \mathbf T)
	\Big) & = \min_{\mathbf T} \Big( \min_{\boldsymbol{\pi}} 
	\Big(\sum_{i=1}^{M_f} c (\boldsymbol{\pi}_i, \mathbf T)
	\Big) \Big) \nonumber \\
	& = \min_{\mathbf T} \Big( 
	 \sum_{i=1}^{M_f} \min_{\boldsymbol{\pi}_i} c (\boldsymbol{\pi}_i, \mathbf T) \Big).
\end{align}

In case of a plane feature, we substitute (\ref{plane_feature_cost}) into $c (\boldsymbol{\pi}_i, \mathbf T)$:

\begin{align}
    \min_{\boldsymbol{\pi}_i} c (\boldsymbol{\pi}_i, \mathbf T) &= \min_{\boldsymbol{\pi}_i} \Big(  \frac{1}{N_i} \sum_{j=1}^{M_p}\sum_{k=1}^{N_{ij}} 
	\left \| \mathbf n_i^T (\mathbf p_{ijk}- \mathbf q_i) \right \|_2^2  \Big) \nonumber \\
	&= \lambda_3 (\mathbf A_i), \text{ when } \mathbf n_i^{\star} =\mathbf u_{3} (\mathbf A_i), \mathbf q_i^{\star} =\bar{\mathbf p}_i \label{eq:plane_margin}
\end{align}
where $\lambda_l(\mathbf A_i)$ denotes the $l$-th largest eigenvalue of matrix $\mathbf A_i$, $\mathbf u_{l} (\mathbf A_i)$ denotes the corresponding eigenvector, the matrix $\mathbf A_i$  and vector $\bar{\mathbf p}_i$ are defined as:

\begin{align}\label{A_q_def}
    \mathbf A_i & \triangleq \frac{1}{N_i} \sum_{j=1}^{M_p}\sum_{k=1}^{N_{ij}}
    (\mathbf p_{ijk} - \bar{\mathbf p}_i)(\mathbf p_{ijk} - \bar{\mathbf p}_i)^T, \notag
    \\
    \bar{\mathbf p}_i & \triangleq \frac{1}{N_i} \sum_{j=1}^{M_p} \sum_{k=1}^{N_{ij}}
    \mathbf p_{ijk}.
\end{align}

%RelateSupp
The proof will be given in Supplementary III-A \cite{LiuZheng2022supplementary}. Note that the optimal solution $\mathbf q_i^{\star}$ in (\ref{eq:plane_margin}) is not unique, any deviation from $\mathbf q_i^{\star}$ along a direction perpendicular to $\mathbf n_i^{\star}$ will equally serve the optimal solution. However, these equivalent optimal solution will not change the plane nor the optimal cost (hence the results that follow next). Indeed, the point $\mathbf q_i^{\star}$ could be an arbitrary point on the plane as it is defined to be. 

In case of an edge feature, we substitute (\ref{edge_feature_cost}) into $c (\boldsymbol{\pi}_i, \mathbf T)$:
\begin{align}
    & \min_{\boldsymbol{\pi}_i} c (\boldsymbol{\pi}_i, \mathbf T) \!\!=\! \min_{\boldsymbol{\pi}_i}\! \Big(  \frac{1}{N_i} \sum_{j=1}^{M_p}\sum_{k=1}^{N_{ij}} 
	\left \| (\mathbf I - \mathbf n_i \mathbf n_i^T) (\mathbf p_{ijk}- \mathbf q_i) \right \|_2^2 \Big) \nonumber \\
	&= \lambda_2(\mathbf A_i) + \lambda_3(\mathbf A_i); \text{ when } \mathbf n_i^{\star}=\mathbf u_{1}(\mathbf A_i), \mathbf q_i^{\star}=\bar{\mathbf p}_i. \label{eq:edge_margin}
\end{align}

Again, the optimal solution $\mathbf q_i^{\star}$ in (\ref{eq:edge_margin}) is not unique, any deviation from $\mathbf q_i^{\star}$ along the direction $\mathbf n_i^{\star}$ will equally serve the optimal solution. However, these equivalent optimal solution will not change the edge nor the optimal cost (hence the results that follow next). 

As can be seen from (\ref{eq:plane_margin}) and (\ref{eq:edge_margin}), the parameter $\boldsymbol{\pi}_i$ for each feature, either it is a plane or edge, can be analytically solved and hence removed from the BA optimization process. Consequently, the original BA optimization in (\ref{BA-formulation}) reduces to

\begin{align}
    \min_{\mathbf T} 
	\Big(\sum_{i=1}^{M_f} \lambda_l(\mathbf A_i)
	\Big) \label{BA-formulation-reduced}
\end{align}
where $l \in \{2,3\}$ and we omitted the exact number of eigenvalues in the cost function for brevity. 

Note that the matrix $\mathbf A_i$ in (\ref{BA-formulation-reduced}) depends on the lidar pose $\mathbf T$ since each involved point $\mathbf p_{ijk}$ depends on the pose (see (\ref{A_q_def}) and (\ref{point_transform})). Hence the decision variables of the resultant optimization in (\ref{BA-formulation-reduced}) involve the lidar pose $\mathbf T$ only, which dramatically reduces the optimization dimension (hence computation time). %This elimination of feature parameters is rooted in the three-dimensional measurements of the lidar point cloud, which ensures a closed-form representation of the contained plane and edge features.

\subsection{Point cluster} \label{PointCluster}

With the feature parameters eliminated, another difficulty remaining in the BA optimization  (\ref{BA-formulation-reduced}) is that the evaluation of matrix $\mathbf A_i$ (and its Jacobian or Hessian necessary for developing a numerical solver) requires to enumerate every point observed at each lidar pose. Such an enumeration is extremely computationally expensive due to the large number of points in a lidar scan. In this section, we show such point enumeration can be avoided by {\it point cluster}, which is detailed as follows. 

A {\it point cluster} is a finite point set denoted by set $\boldsymbol{\mathcal{C}} = \{ \mathbf p_{{k}} \in \mathbb{R}^3 | k = 1, \cdots, n\}$, the corresponding {\it point cluster coordinate}, denoted as $\boldsymbol{\Re}\!\left( \boldsymbol{\mathcal{C}} \right)$, is defined as:
\begin{align}
    \boldsymbol{\Re}\!\left( \boldsymbol{\mathcal{C}} \right) \triangleq 
    \sum_{k=1}^n
    \begin{bmatrix}
		\mathbf p_{k} \\ 1
	\end{bmatrix}
	\begin{bmatrix}
		\mathbf p_{k}^T & 1
	\end{bmatrix}
	&=
	\begin{bmatrix}
		\mathbf P & \mathbf v \\
		\mathbf v^T & n
	\end{bmatrix}  \in \mathbb{S}^{4 \times 4} \notag
	\\
	\mathbf P = \sum_{k=1}^n \mathbf p_{k} \mathbf p_{k}^T,
	\quad &
	\mathbf v = \sum_{k=1}^n \mathbf p_{k},
	\label{def-pc}
\end{align}
where $\mathbb{S}^{4 \times 4}$ denotes the set of $4 \times 4$ symmetric matrix. 

A \textit{point cluster} can be thought as a generalized point, for which a rigid transform could be applied. Similarly, we can define rigid transformation on a point cluster as follows.

% \begin{figure} [t]
% 	\centering
%  \includegraphics[width=8.5cm]{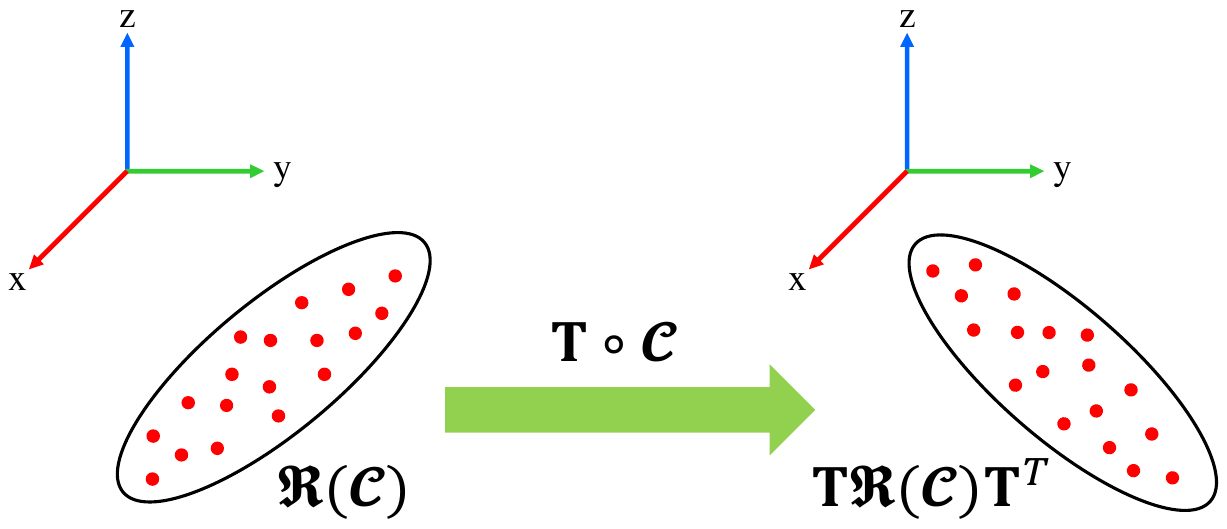}
% 	\caption{Rigid transformation of a \textit{point cluster}.}
% 	\label{fig-transform}
% 	\centering
%  \includegraphics[width=8.5cm]{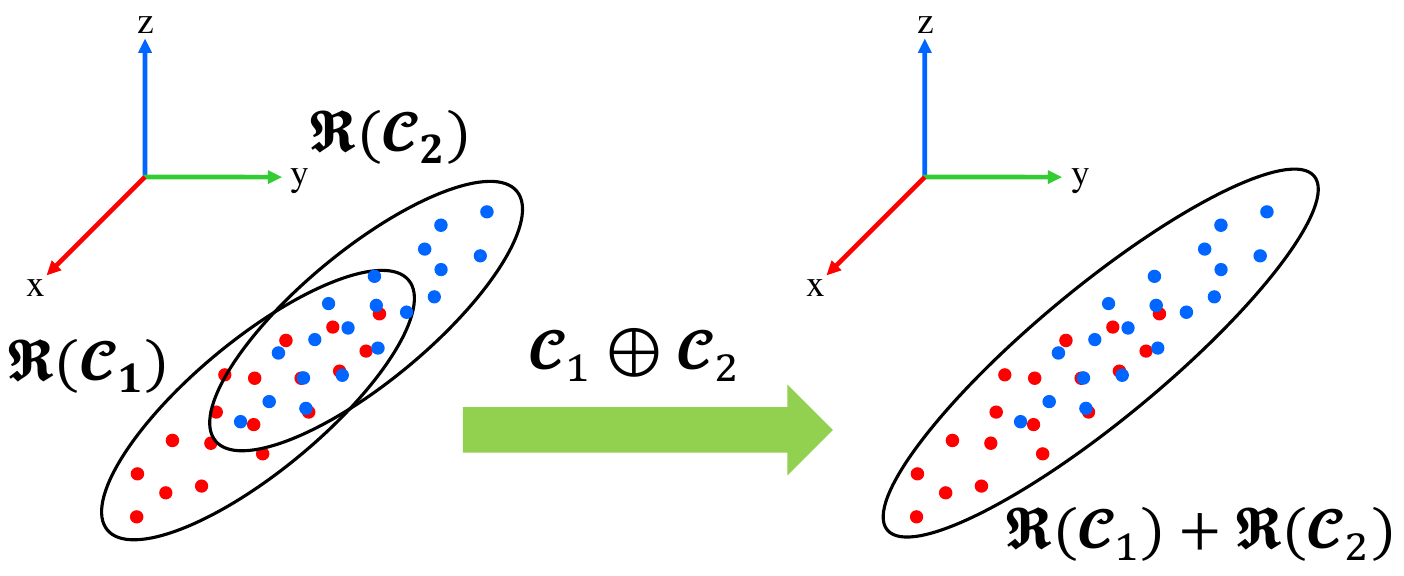}
% 	\caption{Addition of two \textit{point clusters}.}
% 	\label{fig-add}
% \end{figure}

\begin{figure}[t]

% \centering
\subfigure[Rigid transform]
{
    \begin{minipage}[t]{0.9\linewidth}
    % \centering
    \includegraphics[width=0.9\linewidth]{figures/transform.pdf}
    \end{minipage}
}

\subfigure[Cluster merging]
{
    \begin{minipage}[t]{0.9\linewidth}
    % \centering
    \includegraphics[width=1\linewidth]{figures/add.pdf}
    \label{fig-add}
    \end{minipage}%
}

\caption{Two operations on {\it point cluster} (a) Rigid transform (b) Cluster merging.}
\label{fig-operation}
\end{figure}

\begin{definition} \label{def_transform}
(\textbf{Rigid transform}) Given a point cluster with point collection $\boldsymbol{\mathcal{C}} = \{ \mathbf p_{{k}} \in \mathbb{R}^3 | k = 1, \cdots, n \}$ and a pose $\mathbf T = \begin{bmatrix}
    \mathbf R & \mathbf t \\
    0 & 1
\end{bmatrix}\in SE(3)$. The rigid transformation of the point cluster $\boldsymbol{\mathcal{C}}$, denoted by $\mathbf T \circ \boldsymbol{\mathcal{C}}$, is defined as
\begin{align}
   \mathbf T \circ \boldsymbol{\mathcal{C}} \triangleq \{ \mathbf R \mathbf p_{{k}} + \mathbf t \in \mathbb{R}^3 | k = 1, \cdots, n \}.
\end{align}
\end{definition}

Besides rigid transformation, we also define cluster merging operation, as follows.

\begin{definition} \label{def_merge}
(\textbf{Cluster merging}) Given two point clusters with point collections $\boldsymbol{\mathcal{C}}_1 = \{ \mathbf p_{k}^1 \in \mathbb{R}^3 | k = 1, \cdots, n_1 \}$ and $\boldsymbol{\mathcal{C}}_2 = \{ \mathbf p_{{k}}^2 \in \mathbb{R}^3 | k = 1, \cdots, n_2 \}$ {in the same reference frame}, respectively. The merged cluster, denoted by $\boldsymbol{\mathcal{C}}_1 \oplus \boldsymbol{\mathcal{C}}_2$, is defined as
\begin{align}
    \boldsymbol{\mathcal{C}}_1 \oplus \boldsymbol{\mathcal{C}}_2 \triangleq \{ \mathbf p_{{k}}^l \in \mathbb{R}^3 | l = 1, 2; k = 1, \cdots, n_i \}.
\end{align}
\end{definition}

Next we will show that the two operations defined above can be fully represented by their point cluster coordinates. 

\begin{theorem} \label{theorem_transform}
Given a point cluster $\boldsymbol{\mathcal{C}} $ and a pose $\mathbf T = \begin{bmatrix}
    \mathbf R & \mathbf t \\
    0 & 1
\end{bmatrix}\in SE(3)$. The rigid transformation of the point cluster satisfies 
\begin{align}
   \boldsymbol{\Re} (\mathbf T \circ \boldsymbol{\mathcal{C}}) =  \mathbf T \boldsymbol{\Re}(\boldsymbol{\mathcal{C}}) \mathbf T^T
\end{align}
\end{theorem}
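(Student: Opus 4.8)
The plan is to reduce everything to the homogeneous-coordinate representation that already underlies the definition of the point cluster coordinate in (\ref{def-pc}). The single observation that drives the entire proof is that the affine map $\mathbf{p}_k \mapsto \mathbf{R}\mathbf{p}_k + \mathbf{t}$ defining the rigid transform in Definition \ref{def_transform} becomes an ordinary linear map once points are written in homogeneous form: for the given pose $\mathbf{T} = \begin{bmatrix} \mathbf{R} & \mathbf{t} \\ 0 & 1 \end{bmatrix}$ one has $\begin{bmatrix} \mathbf{R}\mathbf{p}_k + \mathbf{t} \\ 1 \end{bmatrix} = \mathbf{T}\begin{bmatrix} \mathbf{p}_k \\ 1 \end{bmatrix}$, which holds precisely because the bottom row $\begin{bmatrix} 0 & 1 \end{bmatrix}$ of $\mathbf{T}$ maps the trailing homogeneous coordinate $1$ back to $1$.

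First I would write out $\boldsymbol{\Re}(\mathbf{T} \circ \boldsymbol{\mathcal{C}})$ directly from the definition (\ref{def-pc}), substituting the transformed points $\mathbf{R}\mathbf{p}_k + \mathbf{t}$ supplied by Definition \ref{def_transform} into each homogeneous column and row. Using the identity above, each summand factors as $\mathbf{T}\begin{bmatrix} \mathbf{p}_k \\ 1 \end{bmatrix}\begin{bmatrix} \mathbf{p}_k^T & 1 \end{bmatrix}\mathbf{T}^T$, where the $\mathbf{T}^T$ on the right arises from transposing the homogeneous row factor. The second step is purely algebraic: since $\mathbf{T}$ and $\mathbf{T}^T$ are independent of the summation index $k$, I pull them outside the finite sum by linearity to obtain $\mathbf{T}\Big(\sum_{k=1}^n \begin{bmatrix} \mathbf{p}_k \\ 1 \end{bmatrix}\begin{bmatrix} \mathbf{p}_k^T & 1 \end{bmatrix}\Big)\mathbf{T}^T$, and recognize the bracketed sum as $\boldsymbol{\Re}(\boldsymbol{\mathcal{C}})$ by (\ref{def-pc}). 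This immediately yields the claimed identity.

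There is essentially no hard step here; the result collapses to a one-line computation once the homogeneous viewpoint is adopted, and the main obstacle is merely conceptual, namely spotting that the affine transform linearizes in homogeneous coordinates. The only point requiring careful bookkeeping is the transpose: the right factor must be $\mathbf{T}^T$ rather than $\mathbf{T}$, which follows from $\big(\mathbf{T}\begin{bmatrix} \mathbf{p}_k \\ 1 \end{bmatrix}\big)^T = \begin{bmatrix} \mathbf{p}_k^T & 1 \end{bmatrix}\mathbf{T}^T$. As a sanity check, I would note that the right-hand side $\mathbf{T}\boldsymbol{\Re}(\boldsymbol{\mathcal{C}})\mathbf{T}^T$ is manifestly symmetric and hence remains in $\mathbb{S}^{4\times 4}$, consistent with $\boldsymbol{\Re}$ always producing a symmetric matrix; this confirms well-definedness rather than constituting a separate proof obligation.
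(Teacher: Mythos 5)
Your proof is correct and complete: the homogeneous-coordinate factorization $\bigl[\,(\mathbf{R}\mathbf{p}_k+\mathbf{t})^T\; 1\,\bigr]^T = \mathbf{T}\bigl[\,\mathbf{p}_k^T\; 1\,\bigr]^T$ followed by pulling $\mathbf{T}$ and $\mathbf{T}^T$ out of the finite sum is exactly the argument the paper gives in its supplementary material, and it is essentially the only natural route to this identity. No gaps; the remark about the transpose bookkeeping and the symmetry sanity check are both sound.
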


%RelateSupp
\begin{proof}
See Supplementary III-B \cite{LiuZheng2022supplementary}.
\end{proof}

\begin{theorem} \label{theorem_merge}
Given two point clusters $\boldsymbol{\mathcal{C}}_1 $ and $\boldsymbol{\mathcal{C}}_2$ {in the same reference frame}. The merged cluster satisfies
\begin{align}
    \boldsymbol{\Re} (\boldsymbol{\mathcal{C}}_1 \oplus \boldsymbol{\mathcal{C}}_2) =  \boldsymbol{\Re}(\boldsymbol{\mathcal{C}}_1) + \boldsymbol{\Re}(\boldsymbol{\mathcal{C}}_2)
\end{align}
\end{theorem}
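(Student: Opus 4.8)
The plan is to read off the result directly from the definition of the point cluster coordinate in (\ref{def-pc}), since Theorem \ref{theorem_merge} is essentially the statement that summation over a disjoint union decomposes into the sum over each part. No linearization, manifold structure, or eigen-analysis is needed here — unlike Theorem \ref{theorem_transform}, the merging map carries no transformation, so the argument is purely combinatorial.

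First I would expand $\boldsymbol{\Re}(\boldsymbol{\mathcal{C}}_1 \oplus \boldsymbol{\mathcal{C}}_2)$ using (\ref{def-pc}). By Definition \ref{def_merge}, the merged cluster is the collection containing every point $\mathbf p_k^1$ ($k=1,\dots,n_1$) of $\boldsymbol{\mathcal{C}}_1$ together with every point $\mathbf p_k^2$ ($k=1,\dots,n_2$) of $\boldsymbol{\mathcal{C}}_2$, so the defining sum ranges over all $n_1+n_2$ points. I would then split this single sum into the contribution of the first $n_1$ points and that of the remaining $n_2$ points:
\begin{align}
    \boldsymbol{\Re}(\boldsymbol{\mathcal{C}}_1 \oplus \boldsymbol{\mathcal{C}}_2)
    &= \sum_{k=1}^{n_1}
    \begin{bmatrix} \mathbf p_k^1 \\ 1 \end{bmatrix}
    \begin{bmatrix} (\mathbf p_k^1)^T & 1 \end{bmatrix}
    + \sum_{k=1}^{n_2}
    \begin{bmatrix} \mathbf p_k^2 \\ 1 \end{bmatrix}
    \begin{bmatrix} (\mathbf p_k^2)^T & 1 \end{bmatrix} \nonumber \\
    &= \boldsymbol{\Re}(\boldsymbol{\mathcal{C}}_1) + \boldsymbol{\Re}(\boldsymbol{\mathcal{C}}_2), \nonumber
\end{align}
where the second equality is again just the definition (\ref{def-pc}) applied to each cluster. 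Equivalently, one could argue blockwise: the $\mathbf P$, $\mathbf v$, and $n$ blocks of $\boldsymbol{\Re}$ are respectively $\sum_k \mathbf p_k \mathbf p_k^T$, $\sum_k \mathbf p_k$, and the point count, each of which is additive over the disjoint index sets, so the three blocks of the merged coordinate are the sums of the corresponding blocks.

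The only point requiring care — and the closest thing to an obstacle — is that the merging operation $\oplus$ must be understood as a multiset union that preserves multiplicity rather than a set union that would deduplicate coincident points. The additive identity relies on the two index ranges $k=1,\dots,n_1$ and $k=1,\dots,n_2$ being kept genuinely separate; if shared points were collapsed, the count block $n$ and the accumulated terms would no longer add. I would therefore note explicitly that Definition \ref{def_merge} retains all points from both collections, which guarantees the clean split above and completes the proof.
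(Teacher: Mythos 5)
Your proof is correct and follows essentially the same route the paper takes: expand the definition of the point cluster coordinate over the merged collection, split the sum over the two disjoint index ranges, and recognize each partial sum as the coordinate of the respective cluster. Your remark that $\oplus$ must preserve multiplicity (a multiset-style union, as Definition \ref{def_merge} indeed does by keeping both index ranges) is a sensible clarification but does not change the argument.
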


%RelateSupp
\begin{proof}
See Supplementary III-C \cite{LiuZheng2022supplementary}.
\end{proof}

As can be seen, rigid transformation and cluster merging operations on point clusters can be represented by usual matrix multiplication and addition on the point cluster coordinates. A visual illustration of the two operations and their coordinate representations are shown in Fig. \ref{fig-operation}. These results are crucially important: Theorem \ref{theorem_transform} indicates that the point cluster can be constructed in one frame (e.g., local lidar frame) and transformed to another (e.g., the global frame) without enumerating each individual points; Theorem \ref{theorem_merge} indicates that two (and by induction more) point clusters can be further merged to form a new point cluster. A particular case of Theorem \ref{theorem_merge} is when the second point cluster contains a single point, indicating that the point cluster can be constructed incrementally as lidar points arrives sequentially. 

\iffalse
Enabled by the above theoretical results, the \textit{point cluster} is implemented as the \textbf{Data Structure}, where the $4 \times 4$ matrix is the point cluster coordinate $\boldsymbol{\Re}(\boldsymbol{\mathcal{C}})$ and the two functions \texttt{Transform} and \texttt{Merging} correspond to the rigid transformation and cluster merging, respectively. 

\begin{algorithm} [h]
	\SetAlgoLined
	\NoCaptionOfAlgo
	\caption{\textbf{Data Structure}: Point cluster structure}
	\SetKwProg{pointCluster}{Struct}{:}{}
	\pointCluster{PointCluster}
	{
		\textit{Matrix44} \texttt{C}; \\
		\textbf{function} \texttt{Transform}(\textit{Matrix33} \texttt{R}, \textit{Vector3} \texttt{t}); \\
		\textbf{function} \texttt{Merging}(\textit{PointCluster} \texttt{pc});  
	}
\end{algorithm}
\fi

{
\begin{remark}
The concept of point cluster and its two operations are not new and have been used in previous works such as \cite{ferrer2019eigen, zhou2020efficient, huang2021bundle}. In this paper, we formalized this concept by 1) introducing the point cluster coordinate composing of $\mathbf P, \mathbf v$, and $n$ as in (\ref{def-pc}), 2) formalizing the two operations: rigid transform and cluster merging, and 3) explicitly showing the relation between point cluster operations and their coordinates. 
\end{remark}
}

\begin{remark} A point set and its coordinate is not a one-to-one mapping. While it is obvious that the coordinate is uniquely determined from the point set as shown in (\ref{def-pc}), the reverse way does not hold: the point set cannot be recovered from its coordinate uniquely. Since different point sets may lead to the same coordinate, the raw points must be saved if a re-clustering is needed. 
\end{remark}

\begin{figure} [t]
	\centering
	\includegraphics[width=1.0\linewidth]{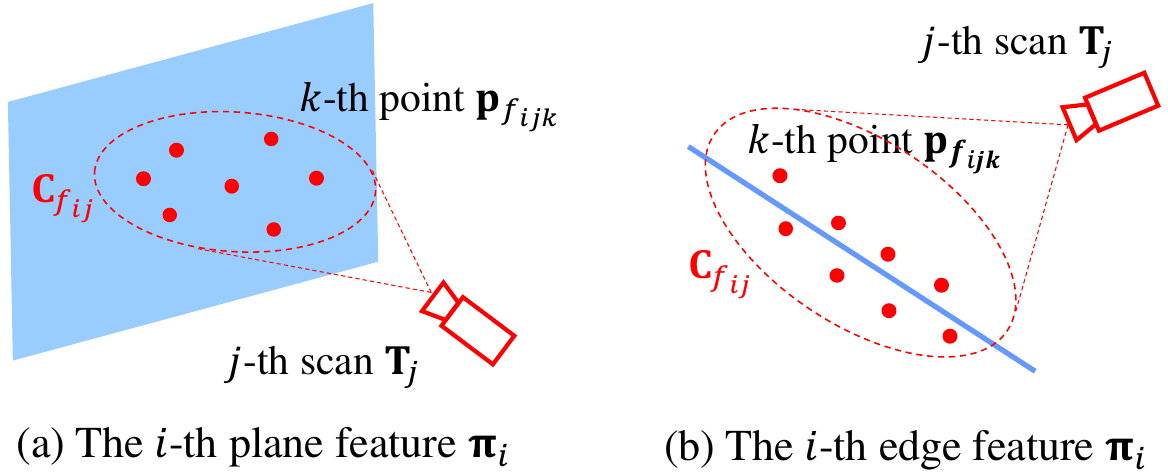}
	\caption{For the $i$-the feature (either plane or edge), all points observed at the $j$-th pose are clustered as a point cluster and is represented by ${\mathbf C}_{f_{ij}}$ in its local frame.}
	\label{fig-formulation-PC}
	\vspace{-0.2cm}
\end{figure}

Now, we apply the point cluster to the BA problem concerned in this paper. To start with, we group all points on the same feature as a point cluster. For example, the point cluster for the $i$-th feature is $\boldsymbol{\mathcal{C}}_i \triangleq \{ \mathbf p_{ijk} | j = 1, \cdots, M_p,  k = 1, \cdots, N_{ij} \}$. Denote $\mathbf C_i$ the coordinate of the {point cluster}, following (\ref{def-pc}), we obtain
\begin{align}
    \mathbf C_i & = \boldsymbol{\Re}( \boldsymbol{\mathcal{C}}_i ) \triangleq 
    \sum_{j=1}^{M_p}\sum_{k=1}^{N_{ij}}
    \begin{bmatrix}
		\mathbf p_{ijk} \\ 1
	\end{bmatrix}
	\begin{bmatrix}
		\mathbf p_{ijk}^T & 1
	\end{bmatrix}
	=
	\begin{bmatrix}
		\mathbf P_i & \mathbf v_i \\
		\mathbf v_i^T & N_i
	\end{bmatrix}  \notag
	\\
	&\quad \mathbf P_i = \sum_{j=1}^{M_p}\sum_{k=1}^{N_{ij}} \mathbf p_{ijk} \mathbf p_{ijk}^T,
	\quad 
	\mathbf v_i = \sum_{j=1}^{M_p}\sum_{k=1}^{N_{ij}} \mathbf p_{ijk}
\end{align}

A key result we show now is that this point cluster coordinate $\mathbf C_i$ is completely sufficient to represent the matrix $\mathbf A_i$ required in the BA optimization (\ref{BA-formulation-reduced}). According to (\ref{A_q_def}), we have
\begin{align}
	\mathbf A_i 
	=& \frac{1}{N_i} \sum_{j=1}^{M_p}\sum_{k=1}^{N_{ij}} 
	(\mathbf p_{ijk} - \bar{\mathbf p}_i)(\mathbf p_{ijk} - \bar{\mathbf p}_i)^T
	\notag \\
	\iffalse
	=& \frac{1}{N_i} \sum_{j=1}^{M_p}\sum_{k=1}^{N_{ij}} 
	(\mathbf p_{ijk} \mathbf p_{ijk}^T - \mathbf p_{ijk}  \bar{\mathbf p}_i^T - \bar{\mathbf p}_i\mathbf p_{ijk}^T + \bar{\mathbf p}_i\bar{\mathbf p}_i^T)
	\notag \\
	\fi
	=& \frac{1}{N_i} \sum_{j=1}^{M_p}\sum_{k=1}^{N_{ij}} 
	(\mathbf p_{ijk} \mathbf p_{ijk}^T) - \bar{\mathbf p}_i\bar{\mathbf p}_i^T
	\notag \\
	=& \frac{1}{N_i} \mathbf P_i - \frac{1}{N_i^2} \mathbf v_i \mathbf v_i^T \triangleq \mathbf A(\mathbf C_i) 
	\label{simpleA}
\end{align}
where we denote $\mathbf A_i$ as a function of $\mathbf C_i$ since the $\mathbf A_i$ is fully represented (and uniquely determined) by $\mathbf C_i$. We slightly abuse the notation here by denoting the function as $\mathbf A(\cdot)$. 

On the other hand, since the point set $\boldsymbol{\mathcal{C}}_i$ is defined on points in the global frame, the coordinate $\mathbf C_i$ is dependent on the lidar pose, which remains to be optimized. To explicitly parameterize the lidar pose, we note that 
\begin{align}
    \boldsymbol{\mathcal{C}}_i & \triangleq \{ \mathbf p_{ijk} | j = 1, \cdots, M_p,  k = 1, \cdots, N_{ij} \} \\
    &= \cup_{j=1}^{M_p} \{ \mathbf p_{ijk} | k = 1, \cdots, N_{ij} \} \\
    &\stackrel{\text{Def.} \ref{def_merge}}{=} \oplus_{j=1}^{M_p} \boldsymbol{\mathcal{C}}_{ij} , \quad \boldsymbol{\mathcal{C}}_{ij} \triangleq  \{ \mathbf p_{ijk} | k = 1, \cdots, N_{ij} \} \\
    &\stackrel{\text{Def.} \ref{def_transform}}{=} \oplus_{j=1}^{M_p} \left (\mathbf T_j \circ  \boldsymbol{\mathcal{C}}_{f_{ij}}  \right), \ \boldsymbol{\mathcal{C}}_{f_{ij}} \triangleq  \{ \mathbf p_{f_{ijk}} | k = 1, \cdots, N_{ij} \} \label{eq:coord_relation}
\end{align}
where $ \mathbf p_{f_{ijk}}$ is a point represented in the lidar local frame (see (\ref{point_transform})), $\boldsymbol{\mathcal{C}}_{{ij}}$ is the point set that is composed of all points on the $i$-th feature (either plane or edge) observed at the $j$-th lidar pose, and $\boldsymbol{\mathcal{C}}_{f_{ij}}$ is the same point set as $\boldsymbol{\mathcal{C}}_{{ij}}$, but represented in the lidar local frame (see Fig. \ref{fig-formulation-PC}).

The relation between the point cluster $\boldsymbol{\mathcal{C}}_{i} $ and $\boldsymbol{\mathcal{C}}_{f_{ij}}$ shown in (\ref{eq:coord_relation}) will lead to a relation between their coordinates $\mathbf C_i$ and $\mathbf C_{f_{ij}}$ as below:
\begin{align}
    \mathbf C_i \stackrel{\text{Thm.} \ref{theorem_merge}}{=} \sum_{j=1}^{M_p} \mathbf C_{ij} \stackrel{\text{Thm.} \ref{theorem_transform}}{=} \sum_{j=1}^{M_p} \mathbf T_j \mathbf C_{f_{ij}} \mathbf T_j^T
\end{align}

As a result, the BA optimization in (\ref{BA-formulation-reduced}) further reduces to

\begin{align}
    \min_{\mathbf T_j \in SE(3), \forall j} 
	\underbrace{\left(\sum_{i=1}^{M_f} \lambda_l \left(\mathbf A \left(\sum_{j=1}^{M_p} \mathbf T_j \mathbf C_{f_{ij}} \mathbf T_j^T \right) \right)
	\right)}_{c(\mathbf T)} \label{BA-formulation-reduced-reduced}
\end{align}
where the function $\mathbf A (\cdot)$ is defined in (\ref{simpleA}). Note that the cost function in (\ref{BA-formulation-reduced-reduced}) only requires the knowledge of point cluster coordinate $\mathbf C_{f_{ij}}$ without enumerating each individual points. The coordinate $\mathbf C_{f_{ij}}$ is computed as (following (\ref{def-pc})):
\begin{align}
    \mathbf C_{f_{ij}} &= 
	\begin{bmatrix}
		\mathbf P_{f_{ij}} & \mathbf v_{f_{ij}} \\
		\mathbf v_{f_{ij}}^T & N_{ij}
	\end{bmatrix}   \notag
	\\
	\mathbf P_{f_{ij}} &= \sum_{k=1}^{N_{ij}} \mathbf p_{f_{ijk}} \mathbf p_{f_{ijk}}^T, \quad
	\mathbf v_{f_{ij}} = \sum_{k=1}^{N_{ij}} \mathbf p_{f_{ijk}} \label{feature-pc}
\end{align}
which can be constructed during the feature association stage before the optimization. In particular, if the $j$-th pose observes no point on the $i$-th feature, $\mathbf C_{f_{ij}} = \mathbf 0_{4 \times 4}$, which will naturally remove the dependence on the $j$-th pose for the $i$-th cost item as shown in (\ref{BA-formulation-reduced-reduced}).

\begin{theorem} \label{theorem:invariance}
Given a matrix function $\mathbf A (\mathbf C ) \triangleq \frac{1}{N} \mathbf P - \frac{1}{N^2} \mathbf v \mathbf v^T$ with $ \mathbf C = \begin{bmatrix}
        \mathbf P & \mathbf v \\
        \mathbf v^T & N
    \end{bmatrix} \in \mathbb{S}^{4 \times 4}$, $\lambda_l(\mathbf A)$ denotes the $l$-th largest eigenvalue of $\mathbf A$, then $\lambda_l(\mathbf A(\mathbf C))$ is invariant to any rigid transformation $\mathbf T_0 \in SE(3)$. That is, 
    \begin{align}
        \lambda_l \left (\mathbf A \left( \mathbf T_0 \mathbf{C} \mathbf T_0^T \right) \right) = \lambda_l \left(\mathbf A \left(\mathbf{C} \right) \right), \forall \mathbf T_0 \in SE(3). 
    \end{align}
\end{theorem}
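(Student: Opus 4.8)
The plan is to reduce the spectral claim to the elementary fact that eigenvalues are preserved under an orthogonal similarity. Writing $\mathbf T_0 = \begin{bmatrix} \mathbf R_0 & \mathbf t_0 \\ 0 & 1 \end{bmatrix}$ with $\mathbf R_0 \in SO(3)$, I would first prove the single matrix identity $\mathbf A(\mathbf T_0 \mathbf C \mathbf T_0^T) = \mathbf R_0 \, \mathbf A(\mathbf C) \, \mathbf R_0^T$, i.e. that the translation part $\mathbf t_0$ cancels out entirely and only the rotation survives. Once this is in hand, orthogonality $\mathbf R_0^T = \mathbf R_0^{-1}$ makes $\mathbf R_0 \, \mathbf A(\mathbf C)\, \mathbf R_0^T$ a similarity transform of $\mathbf A(\mathbf C)$, so the two matrices share the same spectrum, and in particular the $l$-th largest eigenvalues agree: $\lambda_l(\mathbf A(\mathbf T_0 \mathbf C \mathbf T_0^T)) = \lambda_l(\mathbf A(\mathbf C))$ for every $l$.

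The first step toward the identity is to compute the transformed coordinate $\mathbf C' \triangleq \mathbf T_0 \mathbf C \mathbf T_0^T$ by block multiplication and read off its blocks. Writing $\mathbf C' = \begin{bmatrix} \mathbf P' & \mathbf v' \\ (\mathbf v')^T & N' \end{bmatrix}$, the bottom-right scalar is unchanged, $N' = N$, while a direct expansion gives $\mathbf v' = \mathbf R_0 \mathbf v + N \mathbf t_0$ and $\mathbf P' = \mathbf R_0 \mathbf P \mathbf R_0^T + \mathbf t_0 \mathbf v^T \mathbf R_0^T + \mathbf R_0 \mathbf v \mathbf t_0^T + N \mathbf t_0 \mathbf t_0^T$. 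This block expansion is pure bookkeeping, but it is where most of the writing lives.

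The second step substitutes these blocks into $\mathbf A(\mathbf C') = \frac{1}{N} \mathbf P' - \frac{1}{N^2} \mathbf v' (\mathbf v')^T$. Expanding the outer-product term as $\frac{1}{N^2} \mathbf v' (\mathbf v')^T = \frac{1}{N^2} \mathbf R_0 \mathbf v \mathbf v^T \mathbf R_0^T + \frac{1}{N} \mathbf R_0 \mathbf v \mathbf t_0^T + \frac{1}{N} \mathbf t_0 \mathbf v^T \mathbf R_0^T + \mathbf t_0 \mathbf t_0^T$ and subtracting it from $\frac{1}{N}\mathbf P'$, the two mixed $\mathbf R_0 \mathbf v \mathbf t_0^T$ terms cancel, the two mixed $\mathbf t_0 \mathbf v^T \mathbf R_0^T$ terms cancel, and the two $\mathbf t_0 \mathbf t_0^T$ terms cancel, leaving exactly $\mathbf A(\mathbf C') = \mathbf R_0 \big( \frac{1}{N} \mathbf P - \frac{1}{N^2} \mathbf v \mathbf v^T \big) \mathbf R_0^T = \mathbf R_0 \, \mathbf A(\mathbf C) \, \mathbf R_0^T$. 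I expect this cancellation to be the only genuine content of the proof, and the main obstacle is purely organizational: laying out the eight matrix products so the cancellation is manifest rather than a blur of terms.

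As a conceptually cleaner sanity check (and a possible remark), one can bypass the coordinate algebra: by Theorem~\ref{theorem_transform} the matrix $\mathbf A(\mathbf C)$ is the sample \emph{covariance} of the underlying point cluster, and $\mathbf A(\mathbf T_0 \mathbf C \mathbf T_0^T)$ is the covariance of the rigidly transformed cluster. A rigid motion rotates every centered point by $\mathbf R_0$ while translation leaves the centering (hence the covariance) invariant, so the covariance transforms as $\mathbf R_0 (\cdot) \mathbf R_0^T$, recovering the same identity without expanding blocks. Either route terminates in the orthogonal-similarity argument, completing the proof.
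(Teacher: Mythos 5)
Your proposal is correct and follows essentially the same route as the paper's proof in the supplementary: block-expand $\mathbf T_0 \mathbf C \mathbf T_0^T$, verify that the translation terms cancel so that $\mathbf A(\mathbf T_0\mathbf C\mathbf T_0^T) = \mathbf R_0\,\mathbf A(\mathbf C)\,\mathbf R_0^T$, and conclude by invariance of the (ordered) spectrum of a symmetric matrix under orthogonal similarity. The covariance-of-centered-points remark is a nice intuition, but note that the algebraic computation is the one that covers the theorem as stated for an arbitrary $\mathbf C\in\mathbb S^{4\times 4}$ (with $N\neq 0$), not only coordinates arising from an actual point cluster.
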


%RelateSupp
\begin{proof}
See Supplementary III-D \cite{LiuZheng2022supplementary}.
\end{proof}

Theorem \ref{theorem:invariance} implies that left multiplying all poses $\mathbf T_j, \forall j,$ by the same transform $\mathbf T_0$ does not change the optimization at all. That is, the BA optimization is invariant to the change of the global reference frame, which is the well-known gauge freedom in a bundle adjustment problem. 

\subsection{First and Second Order Derivatives} \label{derivative}

As shown in the previous section, the BA optimization problem in (\ref{BA-formulation-reduced-reduced}) is completely equivalent to the original formulation (\ref{BA-formulation}), where each cost item standards for the squared Euclidean distance from a point to a plane (or edge). This squared distance is essentially a quadratic optimization, which requires the knowledge of the second-order information of the cost function for efficient solving. In this section, we derive such first and second order derivatives. Without loss of generality, we only discuss the $i$-th feature, which contributes a cost item in the form of
\begin{align}\label{cost_item}
    c_i(\mathbf T) = \lambda_l \left(\mathbf A \left(\sum_{j=1}^{M_p} \mathbf T_j \mathbf C_{f_{ij}} \mathbf T_j^T \right) \right)
\end{align}
with $\mathbf C_{f_{ij}} \in \mathbb{R}^{4 \times 4}$ being a pre-computed matrix (see (\ref{feature-pc})). 

To derive the derivative of the cost item (\ref{cost_item}) w.r.t. the pose $\mathbf T_j$, which is an element of the Special Euclidean group $SE(3)$, we parameterize its perturbation by a special addition, called boxplus ($\boxplus$-operation). For the pose vector $\mathbf T = (\cdots, \mathbf T_j, \cdots)$, we define the the  $\boxplus$ operation as below
\begin{align}\label{eq:input_perturbation}
    \mathbf T \boxplus \delta \mathbf T &\triangleq (\cdots, \mathbf T_j \boxplus \delta \mathbf T_j, \cdots), \\
    \mathbf T_j \boxplus \delta \mathbf T_j & \triangleq (\exp{(\lfloor \delta \boldsymbol{\phi}_j \rfloor)} \mathbf R_j, \delta \mathbf t_j + \exp{(\lfloor \delta \boldsymbol{\phi}_j \rfloor)} \mathbf t_j ), \label{eq:input_perturbation_per}
\end{align}
where $\delta \mathbf T \triangleq (\cdots, \delta \mathbf T_j, \cdots) \in \mathbb{R}^{6M_p}$ with $\delta \mathbf T_j \triangleq (\delta \boldsymbol{\phi}_j, \delta \mathbf t_j) \in \mathbb{R}^6, \forall j \in {1, \cdots, M_p}, $ is the perturbation on the pose vector.

For a scalar function $f(\mathbf T): SE(3) \times \cdots \times SE(3) \mapsto \mathbb{R}$, denote $\left( \frac{\partial{f(\mathbf T)}}{\partial \mathbf T} \right)  (\mathbf T_0) $ its first-order derivative and $\left( \frac{\partial{f^2(\mathbf T)}}{\partial \mathbf T^2} \right) (\mathbf T_0)$ its second-order derivative, both at a chosen point of the input $\mathbf T_0$. The $\boxplus$ operation enables us to parameterize the input of function $f (\cdot) $,  $\mathbf T$, by its perturbation $\delta \mathbf T$ from a given point $\mathbf T_0$: $\mathbf T = \mathbf T_0 \boxplus \delta \mathbf T$. Since the map between $\mathbf T$ and $\delta \mathbf T$ is bijective if $\| \delta \boldsymbol{\phi}_j \| < \pi, \forall j$, the scalar function $f(\mathbf T)$ in terms of $\mathbf T$ can be equivalently written as a function $f(\mathbf T_0 \boxplus \delta \mathbf T)$ in terms of $\delta \mathbf T$. As a consequence, the derivatives of $f(\mathbf T)$ w.r.t. $\mathbf T$ at the point $\mathbf T_0$ can be defined as the derivatives of $f(\mathbf T_0 \boxplus \delta \mathbf T )$ w.r.t. $\delta \mathbf T$ at zero, where the latter is a normal derivative w.r.t. Euclidean vectors: 
\begin{align}
    &\left( \frac{\partial f(\mathbf T)}{\partial \mathbf T} \right) (\mathbf T_0) \triangleq \left(\frac{\partial f(\mathbf T_0 \boxplus \delta \mathbf T)}{\delta \mathbf T } \right) ( \mathbf 0) \label{eq:first_def} \\
    &\left( \frac{\partial^2 f(\mathbf T)}{\partial \mathbf T^2} \right) (\mathbf T_0) \triangleq \left( \frac{\partial }{\partial \delta \mathbf T } {\left( \frac{\partial f(\mathbf T_0 \boxplus \delta \mathbf T )}{\partial \delta \mathbf T} \right)}  \right) (\mathbf 0) \label{eq:second_def}  \\
    &\quad \quad \quad \quad \quad \quad \quad \quad \quad \quad \forall \mathbf T_0 \in SE(3) \times \cdots \times SE(3). \notag
\end{align}

In the following discussion, we use $\mathbf T$ as the reference point to replace $\mathbf T_0$ in the derivatives and omit it for the sake of notation simplification. 

Based on the derivatives defined in (\ref{eq:first_def}) and (\ref{eq:second_def}), we have the following results for the first and second-order derivatives of the cost item (\ref{cost_item}). 

\begin{theorem} \label{theorem:HJ_matrix}
Given
\begin{itemize}
    \item[(1)] Matrices $\mathbf C_j = \begin{bmatrix}
        \mathbf P_j & \mathbf v_j \\
        \mathbf v_j^T & N_j
    \end{bmatrix} \in \mathbb{S}^{4 \times 4}, j = 1, \cdots, M_p; $
    \item[(2)] Poses $\mathbf T_j \in SE(3), j = 1, \cdots, M_p;$
    \item[(3)] A matrix $ \mathbf C = \begin{bmatrix}
        \mathbf P & \mathbf v \\
        \mathbf v^T & N
    \end{bmatrix} \triangleq \sum_{j=1}^{M_p} \mathbf T_j \mathbf C_{{j}} \mathbf T_j^T  \in \mathbb{S}^{4 \times 4}$, {which is the aggregation of $\mathbf C_j$,} and a matrix function $\mathbf A (\mathbf C ) \triangleq \frac{1}{N} \mathbf P - \frac{1}{N^2} \mathbf v \mathbf v^T \in \mathbb{S}^{3 \times 3}$;  and 
    \item[(4)] A function $ \lambda_l \left(\mathbf A \left( \mathbf C \right) \right)$, $\lambda_l(\mathbf A)$ denotes the $l$-th ($l=1,2,3$) largest eigenvalue of $\mathbf A$ with corresponding eigenvector $\mathbf u_l$;
\end{itemize}
The Jacobian matrix {$\mathbf J_l$} and Hessian matrix {$\mathbf H_l$} of the function $\lambda_l(\mathbf A(\mathbf C))$ with respect to the poses $\mathbf T$ are
\begin{align}\label{eq: Jacobian_Hessian}
	&{\mathbf J_l} = \mathbf g_{ll} \in \mathbb{R}^{1 \times 6 M_p}, 
	\\ 
	&{\mathbf H_l} = {\mathbf W_l} + \sum_{k=1, k \neq l}^{3} \frac{2}{\lambda_l - \lambda_k} \mathbf g_{kl}^T \mathbf g_{kl} \in \mathbb{R}^{6 M_p \times 6 M_p}, \label{eq: Hessian}
\end{align}
{where $\mathbf g_{ll}$ is $\mathbf g_{kl}$ with $k\!=\!l$. $\mathbf g_{kl}$ and $\mathbf W_l$ are matrices partitioned as}
\begin{align}
    & \mathbf g_{kl} = 
	\begin{bmatrix}
		\cdots & {\mathbf g_{kl}^{j}} & \cdots
	\end{bmatrix} \in \mathbb R^{1 \times 6M_p},
    \\
    & {\mathbf W_l}
    = \begin{bmatrix}
        & \vdots & \\
        \cdots & {\mathbf W_{l}^{ij}} & \cdots \\
        & \vdots &
    \end{bmatrix}  \in \mathbb R^{6 M_p \times 6M_p},  
\end{align}
{with block elements $\mathbf g_{kl}^{j} \in \mathbb R^{1 \times 6}, \mathbf W_{l}^{ij} \in \mathbb R^{6 \times 6}, \forall i,j \in \{1, \cdots, M_p\}$ defined as} 
\begin{align}
	& {\mathbf g_{kl}^{j} 
	= \frac{1}{N} \mathbf u_l^T \mathbf S_{\text{\bf P}} (\mathbf T_j \!-\! \frac{1}{N} \mathbf C \mathbf F) \mathbf C_j \mathbf T_j^T \mathbf V_k^T} \notag
	\\
	& \qquad \quad {+ \frac{1}{N} \mathbf u_k^T \mathbf S_{\text{\bf P}} (\mathbf T_j \!-\! \frac{1}{N} \mathbf C \mathbf F) \mathbf C_j \mathbf T_j^T \mathbf V_l^T,} \label{eq:g_kl block} \\
    & {\mathbf W_{l}^{ij} \!= \! -\frac{2}{N^2} \mathbf V_l \mathbf T_i \mathbf C_i \mathbf F \mathbf C_j \mathbf T_j^T \mathbf V_l^T \!+\! \mathbbm{1}_{i = j}} \notag
	\\ 
	&\qquad \quad {\cdot \Big( \frac{2}{N} \mathbf V_l \mathbf T_j \mathbf C_j \mathbf T_j^T \mathbf V_l^T  + \begin{bmatrix}
		\mathbf K_{l}^j & \mathbf 0_{3\times 3} \\ 
		\mathbf 0_{3\times 3} & \mathbf 0_{3\times 3}
	\end{bmatrix}
	\Big),} \label{eq:H_ll block}
    \\
	& {\mathbf K_{l}^j
	= \frac{1}{N} \lfloor \mathbf S_{\text{\bf P}} \mathbf T_j \mathbf C_j (\mathbf T_j - \frac{1}{N}\mathbf C\mathbf F)^T \mathbf S_{\text{\bf P}}^T \mathbf u_l \rfloor \lfloor \mathbf u_l \rfloor} \notag
	\\
	&\qquad \quad {+ \frac{1}{N} \lfloor \mathbf u_l \rfloor
	\lfloor \mathbf S_{\text{\bf P}} \mathbf T_j \mathbf C_j (\mathbf T_j - \frac{1}{N}\mathbf C\mathbf F)^T \mathbf S_{\text{\bf P}}^T \mathbf u_l \rfloor }, 
	\\
	& {\mathbf V_l} = 
	\begin{bmatrix}
		-\lfloor \mathbf u_l \rfloor & \mathbf 0_{3\times 1} \\
		\mathbf 0_{3\times 3} & \mathbf u_l
	\end{bmatrix}
	\quad
	\mathbf F = 
	\begin{bmatrix}
    	\mathbf 0_{3 \times 3} & \mathbf 0_{3 \times 1} \\
    	\mathbf 0_{1 \times 3} & 1
    \end{bmatrix}, 
    \\
    & \mathbf S_{\text{\bf P}} = \begin{bmatrix}
		\mathbf I_{3 \times 3} & \mathbf {0}_{3 \times 1}
	\end{bmatrix} 
    \qquad
    {\mathbbm{1}_{i = j} = \left\{\begin{aligned}
    1, \quad i = j \\
    0, \quad i\neq j
    \end{aligned}\right.}.
\end{align}
\end{theorem}

\begin{proof}
See {Supplementary III-E} \cite{LiuZheng2022supplementary}.
\end{proof}

\begin{corollary} \label{theorem:zero_space}
{The Jacobian matrix $\mathbf J_l$ and Hessian matrix $\mathbf H_l$ in Theorem \ref{theorem:HJ_matrix} satisfy that, for any $l=1,2,3$,
\begin{align}\label{eq:J_H_null_space}
    & \mathbf J_l \cdot \delta \mathbf T = 0, \quad \delta \mathbf T^T \cdot \mathbf H_l \cdot \delta \mathbf T = 0, \notag \\
    &\quad \quad \quad \forall \delta \mathbf T \in \boldsymbol{\mathcal{W}} \triangleq \left\{ \left. \begin{bmatrix}
    \mathbf w \\ \vdots \\ \mathbf w
\end{bmatrix} \right| \forall \mathbf w \in \mathbb{R}^{6} \right\}, \\
& \mathbf J_l^j = \mathbf 0_{1 \times 6}, \text{ if } \mathbf C_j = \mathbf 0, \label{eq:J_H_sparsity-1} \\
& \mathbf H_l^{ij} = \mathbf 0_{6 \times 6}, \text{ if } \mathbf C_i = \mathbf 0 \text{ or } \mathbf C_j = \mathbf 0, \label{eq:J_H_sparsity-2}
\end{align}
where $\mathbf J_l^j$ is the $j$-th column block of $\mathbf J_l$ and $\mathbf H_l^{ij}$ is the $i$-th row, $j$-th column block of $\mathbf H_l$. }
\end{corollary}

%RelateSupp
\begin{proof}
{See {Supplementary III-F} \cite{LiuZheng2022supplementary}}.
\end{proof}

\begin{remark} {(\ref{eq:J_H_null_space}) implies that the Jacobian and Hessian matrices have null space containing the space spanned by $\boldsymbol{\mathcal{W}}$. This essentially means that } the cost function (\ref{cost_item}) in the BA optimization does not change along the direction where all the poses are perturbed by the same quantity $\mathbf w$, which agrees with gauge freedom stated in Theorem \ref{theorem:invariance}.
\end{remark}

\begin{remark}  The results in (\ref{eq:J_H_sparsity-1}, \ref{eq:J_H_sparsity-2}) imply that the blocks in Jacobian and Hessian matrices are zeros and hence their computation can be saved if any of the related poses does not observe the current feature {(i.e., $\mathbf C_i = \mathbf 0$ or $\mathbf C_j = \mathbf 0$)}. This sparse structure could save much computation time if a feature is observed only by a sparse set of poses. 
\end{remark}

\begin{remark} The derivatives in Theorem \ref{theorem:HJ_matrix} are obtained based on the pose perturbation defined in (\ref{eq:input_perturbation}), which multiplies the perturbation $\delta \mathbf T$ on the left of the current pose (i.e., a perturbation in the global frame). If other perturbation (denoted by $\delta \breve{\mathbf T}$) is preferred (e.g., a perturbation in the local frame to integrate with other measurements such as IMU pre-integration), where $\delta \mathbf T = \mathbf L \delta \breve{\mathbf T}$ with $\mathbf L$ the Jacobian between the two perturbation parameterization, its first and second order derivatives can be computed as $\breve{\mathbf J} = \mathbf J \cdot \mathbf L$ and $\breve{\mathbf H} = \mathbf L^T \cdot \mathbf H \cdot \mathbf L$, respectively. It can be seen that  $\breve{\mathbf J}$ and $\breve{\mathbf H}$ preserves a nullspace of $\mathbf L \cdot \boldsymbol{\mathcal{W}}$ with $\boldsymbol{\mathcal{W}}$ defined in (\ref{eq:J_H_null_space}). 
\end{remark}

\subsection{Second Order Solver} \label{second-order-solver}

The Jacobin and Hessian matrix from Theorem \ref{theorem:HJ_matrix} are computed for one cost item (\ref{cost_item}) that corresponds to one feature in the space. Denote $\mathbf J_i, \mathbf H_i$ the Jacobian and Hessian matrix for the $i$-th feature (or cost item), to determine the incremental update $\Delta \mathbf T$, we make use of the second order approximation of the total cost function $c(\mathbf T)$ in (\ref{BA-formulation-reduced-reduced}):
\begin{align}
	c(\mathbf T \boxplus \Delta \mathbf T) \approx c(\mathbf T) + {\mathbf J} \Delta \mathbf T + \frac{1}{2}\Delta \mathbf T^T {\mathbf H} \Delta \mathbf T \label{approxition}
\end{align}
where $\mathbf J = \sum_{i=1}^{M_f} \mathbf J_i, \mathbf H = \sum_{i=1}^{M_f} \mathbf H_i$. For any $\mathbf d \in \boldsymbol{\mathcal{W}}$, since $\mathbf J_i \mathbf d = 0, \mathbf d^T \mathbf H_i \mathbf d = 0, \forall i$, we have $\mathbf J \mathbf d = 0$ and $\mathbf d^T \mathbf H \mathbf d = 0, \forall i$, which means that any additional update along $\mathbf d \in \boldsymbol{\mathcal{W}}$ does not change the approximation at all. One way to resolve the gauge freedom is fixing the first pose at its initial value throughout the optimization. That is, setting $\Delta \mathbf T_1 = \mathbf 0$ in (\ref{approxition}). Then, setting the differentiation of the cost approximation in (\ref{approxition}) w.r.t. $\Delta \mathbf T$ (excluding $\Delta \mathbf T_1$) to zero leads to the optimal update $\Delta \mathbf T^{\star}$:
\begin{align}\label{linear_equ}
    \Delta \mathbf T^{\star} = - \left( \mathbf H + \mu \mathbf I \right)^{-1} \mathbf J^T,
\end{align}
where we used a Levenberg-Marquardt (LM) algorithm-like method to re-weight the gradient and Newton's direction by the damping parameter $\mu$. The complete algorithm is summarized in Supplementary (Algorithm 1) with time analyses detailed in Supplementary (Section IV). Overall, the solver has a complexity of $O\left(M_f M_p + M_f M_p^2 + M_p^3\right)$, which is linear to the number of feature $M_f$, irrelevant to the number of points $N$, and cubic to the number of pose $M_p$. The term $M_f M_p$ and $M_f M_p^2$ are due to the calculation of Jacobian and Hessian, respectively and the term $M_p^3$ is due to (\ref{linear_equ}).

\subsection{Covariance Estimation} \label{covariance}

Assume the solver converges to an optimal pose $\mathbf T^{\star}$, it is often  useful to estimate the confidence level of the estimated pose. Let $\mathbf T^{\text{gt}}$ be the ground-true pose, which is unknown, and $\delta \mathbf T^{\star}$ be the difference between the optimal estimate $\mathbf T^{\star}$ and the ground-true $\mathbf T^{\text{gt}}$, where $\mathbf T^{\text{gt}} = \mathbf T^{\star} \boxplus \delta \mathbf T^{\star}$.  The aim is to estimate the covariance of the error $ \delta \mathbf T^{\star}$, denoted by $\boldsymbol{\Sigma}_{ \delta \mathbf T^{\star}}$. 

Ultimately, the estimation error $\delta \mathbf T^{\star}$ is caused by the measurement noise in each raw point. Denote  $\mathbf p_{f_{ijk}}^{\text{gt}}$ the ground-true location of the $k$-th point observed on the $i$-th feature at the $j$-th lidar pose, with measurement noise $\delta \mathbf p_{f_{ijk}} \in \mathcal N(\mathbf 0, \boldsymbol \Sigma_{\mathbf p_{f_{ijk}}})$, the measured point location, denoted by $\mathbf p_{f_{ijk}}$, is
\begin{align} \label{eq:point-noise}
     \mathbf p_{f_{ijk}} = \mathbf p_{f_{ijk}}^{\text{gt}} + \delta \mathbf p_{f_{ijk}}.
\end{align}

Aggregating the ground-true points and the measured ones lead to the ground-true point cluster, denoted by $\mathbf C_{f_{ijk}}^{\text{gt}}$,  and the measured point cluster, denoted by $\mathbf C_{f_{ijk}}$, respectively (see (\ref{feature-pc})):
\begin{align}
    \mathbf C_{f_{ij}}^{\text{gt}} &= 
	\begin{bmatrix}
		\sum_{k=1}^{N_{ij}} \mathbf p_{f_{ijk}}^{\text{gt}} (\mathbf p_{f_{ijk}}^{\text{gt}})^T & \sum_{k=1}^{N_{ij}} \mathbf p_{f_{ijk}}^{\text{gt}} \\
		\left(\sum_{k=1}^{N_{ij}} \mathbf p_{f_{ijk}}^{\text{gt}} \right)^T & N_{ij}
	\end{bmatrix} \label{Cgt_def} \\
	&\approx \mathbf C_{f_{ij}} - \delta \mathbf C_{f_{ij}},
\end{align}
where
\begin{align}  \label{eq:delta_C_ij}
    \delta \mathbf C_{f_{ij}} = \sum_{k=1}^{N_{ij}} \mathbf B_{f_{ijk}} \delta \mathbf p_{f_{ijk}}, \quad \text{see Supplementary III-G}, %RelateSupp
\end{align}
which can be constructed in advance along with the point cluster $\mathbf C_{f_{ij}}$ during the feature associations stage. 

In the following discussion, to simplify the notation, we denote $\mathbf C^{\text{gt}}_f = \{ \mathbf C^{\text{gt}}_{f_{ij}} \}$, $\mathbf C_{f} = \{ \mathbf C_{f_{ij}} \}$, $\delta \mathbf C_{f} = \{ \delta \mathbf C_{f_{ij}} \}$ the ground-truth, measurements, and noises of all point clusters observed on any features at any lidar poses.  

Although the ground-true pose $\mathbf T^{\text{gt}}$ and point cluster $\mathbf C_{f}^{\text{gt}}$ are unknown, they are genuinely the optimal solution of (\ref{BA-formulation-reduced-reduced}) and hence the Jacobian evaluated there should be zero, i.e., 
\begin{align}\label{eq:Jacobian_eq_0}
    & \mathbf J^T \left(\mathbf T^{\text{gt}}, \mathbf C_{f}^{\text{gt}} \right) = \mathbf 0
\end{align}
where we wrote the Jacobian as an explicit function of the pose and point clusters. Now,  we approximate the left hand side of (\ref{eq:Jacobian_eq_0}) by its first order approximation:
\begin{align}
     &\mathbf J^T (\mathbf T^{\text{gt}}, \mathbf C_{f}^{\text{gt}}) = \mathbf J^T \left(\mathbf T^{\star} \boxplus \delta \mathbf T^{\star}, \mathbf C_{f} - \delta \mathbf C_{f} \right) = \mathbf J^T \left(\mathbf T^{\star} , \mathbf C_{f} \right) \notag \\
     &\ + \! {\frac{\partial \mathbf J^T  \left(\mathbf T^{\star} \boxplus \delta \mathbf T, \mathbf C_{f} \right)}{\partial \delta \mathbf T}} \delta \mathbf T^{\star}  - \frac{\partial \mathbf J^T  \left(\mathbf T^{\star}, \mathbf C_{f} \right)}{\partial \mathbf C_{f}}  \delta \mathbf C_{f}.
\end{align}

Noticing that $ {\frac{\mathbf J^T  \left(\mathbf T^{\star} \boxplus \delta \mathbf T, \mathbf C_{f} \right)}{\partial \delta \mathbf T}} = \mathbf H \left( \mathbf T^{\star}, \mathbf C_{f}\right)$, {the Hessian matrix of (\ref{BA-formulation-reduced-reduced}) evaluated at $\mathbf T^{\star}$ (also see (\ref{approxition}))}, we have
\begin{align}\label{eq:first_order_approx}
    \mathbf 0 &= \mathbf J^T \left(\mathbf T^{\text{gt}}, \mathbf C_{f}^{\text{gt}} \right) =  \mathbf J^T \left(\mathbf T^{\star} , \mathbf C_{f} \right) \notag \\
    & \quad +  \mathbf H \cdot \delta \mathbf T^{\star} -  \frac{\partial \mathbf J^T  \left(\mathbf T^{\star}, \mathbf C_{f} \right)}{\partial \mathbf C_{f}}   \delta \mathbf C_{f},
\end{align}
which implies
\begin{align}
    \delta \mathbf T^{\star} & =- \mathbf H^{-1} \mathbf J^T\left(\mathbf T^{\star} , \mathbf C_{f}\right) + \mathbf H^{-1}  \frac{\partial \mathbf J^T  \left(\mathbf T^{\star}, \mathbf C_{f} \right)}{\partial \mathbf C_{f}}   \delta \mathbf C_{f} 
\end{align}

Since $\mathbf T^{\star}$ is the converged solution using the measured cluster $\mathbf C_{f}$, they should lead to zero update, i.e., $ \mathbf H^{-1} \mathbf J^T \left(\mathbf T^{\star} , \mathbf C_{f}\right) = \mathbf 0$ {(see (\ref{linear_equ}) with zero $\mu$ at convergence)}. Therefore,
\begin{align}
    \delta \mathbf T^{\star} &= \mathbf H^{-1}  \frac{\partial \mathbf J^T  \left(\mathbf T^{\star}, \mathbf C_{f} \right)}{\partial \mathbf C_{f}}   \delta \mathbf C_{f} \sim \mathcal{N} \left(\mathbf 0, \boldsymbol{\Sigma}_{\delta \mathbf T^{\star}} \right),  \label{noise-TtoC}\\
    \boldsymbol{\Sigma}_{\delta \mathbf T^{\star}} &= \mathbf H^{-1}  \frac{\partial \mathbf J^T  \left(\mathbf T^{\star}, \mathbf C_{f} \right)}{\partial \mathbf C_{f}}  \boldsymbol{\Sigma}_{\delta \mathbf C_f}  \frac{\mathbf J  \left(\mathbf T^{\star}, \mathbf C_{f} \right)}{\partial \mathbf C_{f}}  \mathbf H^{-T}.  \label{pose_convariance}
\end{align}

%RelateSupp
We defer the exact derivation and results of $\frac{\mathbf J^T  \left(\mathbf T^{\star}, \mathbf C_{f} \right)}{\partial \mathbf C_{f}} \delta \mathbf C_f$, $\boldsymbol{\Sigma}_{\delta \mathbf C_f}$ and $\boldsymbol{\Sigma}_{\delta \mathbf T^{\star}}$ to Supplementary III-G \cite{LiuZheng2022supplementary}. Note that the evaluation of $\boldsymbol{\Sigma}_{\delta \mathbf T^{\star}}$ only requires the covariance $\delta \mathbf C_{f_{ij}}$, which has been constructed in advance according to (\ref{eq:delta_C_ij}), avoiding the enumeration of each raw point during the optimization.

\section{Implementations}\label{implementation}

We implemented our proposed method in C++ and tested it in Unbuntu 20.04 running on a desktop equipped with Intel i7-10750H CPU and 16Gb RAM. Since the reduced optimization problem (\ref{BA-formulation-reduced-reduced}) is not in a standard least square problem, which existing solvers (e.g., Google Ceres \cite{AgarwalCeresSolver2022}) applies to, we implemented the optimization algorithm with steps and parameters described in {Supplementary (Algorithm 1)}. When solving the linear equation on Line 9 at each iteration, we use the LDLT Cholesky decomposition decomposition method implemented in Eigen library 3.3.7. The termination conditions on Line 19 are iteration number below 50 (i.e., $j_{\text{max}}$= 50), rotation update below $10^{-6}$ rad, and translation update below $10^{-6}$ m.

\section{Consistency Evaluation} \label{simulation}

\begin{figure} [t]
	\centering
	\includegraphics[width=8.5cm]{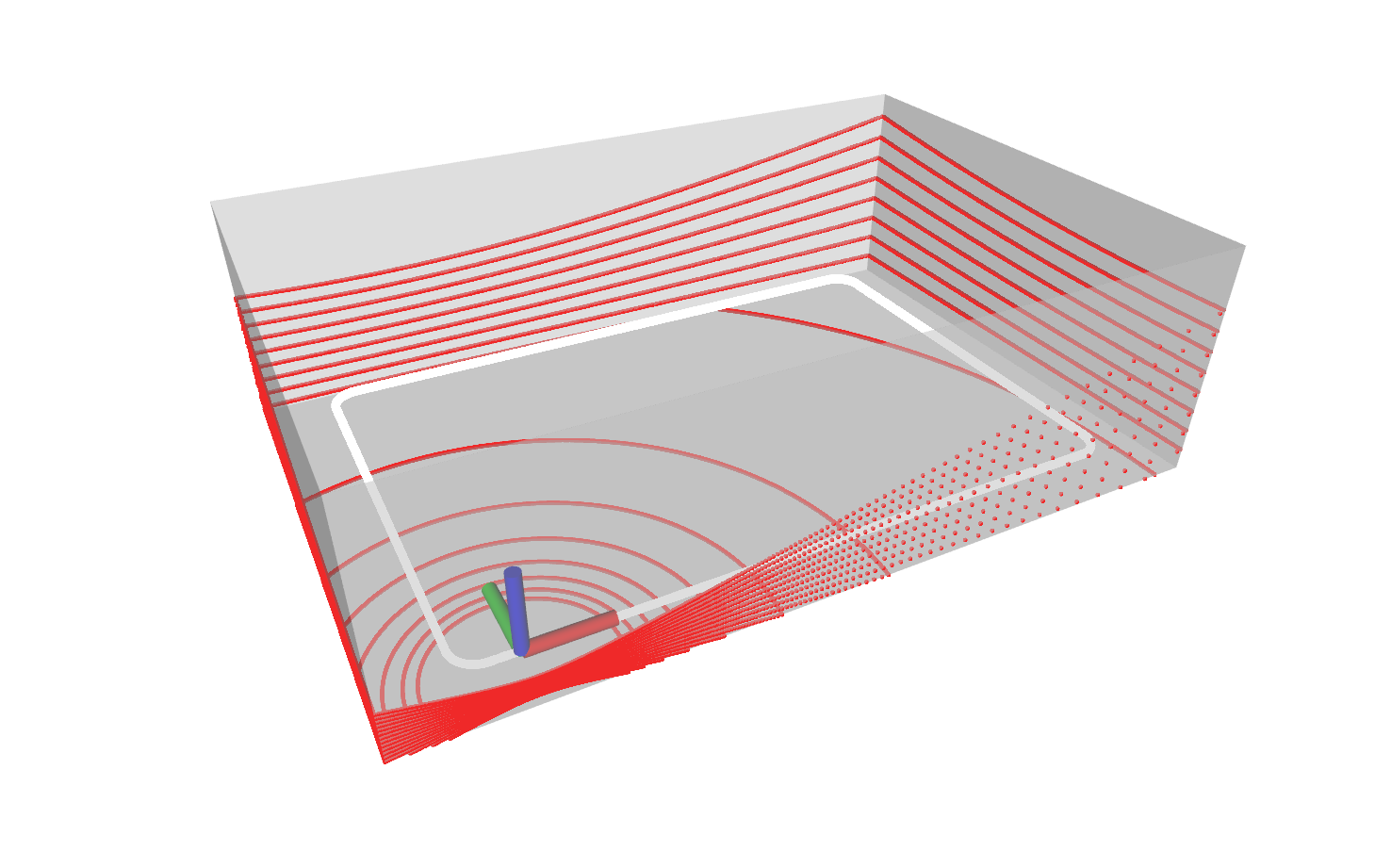}
	\caption{Simulation setup: A 16-channel lidar moves along a rectangular trajectory in a cuboid semi-closed space. The white line is the trajectory and the red lines are the laser points.}
	\label{fig simulation}
\end{figure}

This study aims to verify the consistency of the proposed BA method. That is, whether the estimated covariance $\boldsymbol{\Sigma}_{\delta \mathbf T^{\star}}$ from (\ref{pose_convariance}) agrees with the ground-true covariance of the pose estimation error $\delta \mathbf T^{\star}$.  As the ground-true covariance is unknown, we refer to a standard measure of consistency, the normalized estimation error squared (NEES) \cite{bar2004estimation, forster2016manifold}, which is defined below: 
\begin{align*}
	\eta = (\delta \mathbf T^{\star})^T \boldsymbol{\Sigma}_{\delta \mathbf T^{\star}}^{-1} \delta \mathbf T^{\star},
\end{align*}
where $\delta \mathbf T_j^{\star}$ is the estimation error of the pose defined according to (\ref{eq:input_perturbation_per}):
\begin{align*}
    \delta \mathbf T^{\star} & \triangleq (\cdots, \delta \mathbf T_j^{\star}, \cdots) \in \mathbb{R}^{6M_p},
    \\
	\delta \mathbf T_j^{\star} &= \begin{bmatrix}
	    \text{Log} \left(\mathbf R_j^{\rm{gt}} \left( \mathbf R_j^{\star} \right)^T \right), & 
	    \mathbf t_j^{\rm{gt}} - \mathbf R_j^{\rm{gt}}  (\mathbf R_j^{\star})^T \mathbf t^{\star}_j
	\end{bmatrix}^T,
\end{align*}
where the superscript ``gt" denotes the ground-true poses and (${\mathbf R}_j^{\star}$, ${\mathbf t}_j^{\star}$) denotes the estimated pose for the $j$-th scan.  Assume the pose estimate (${\mathbf R}_j^{\star}$, ${\mathbf t}_j^{\star}$) is unbiased (i.e., $E \left( \delta \mathbf T_j^{\star} \right) = \mathbf 0$), if the computed covariance $ \boldsymbol{\Sigma}_{\delta \mathbf T^{\star}}$ is the ground-truth, we can obtain the expectation 
\begin{align}
    & \! \! \! \! E(\eta) \! = \! E \! \left( \! (\delta \mathbf T^{\star})^T \boldsymbol{\Sigma}_{\delta \mathbf T^{\star}}^{-1} \delta \mathbf T^{\star} \right) \! = \! \text{trace} \! \left(  \! E \! \left( \boldsymbol{\Sigma}_{\delta \mathbf T^{\star}}^{-1} \delta \mathbf T^{\star} (\delta \mathbf T^{\star})^T \right) \! \right) \notag \\
    &\!= \! \text{trace} \! \left(  \! \boldsymbol{\Sigma}_{\delta \mathbf T^{\star}}^{-1} E \! \left( \delta \mathbf T^{\star} (\delta \mathbf T^{\star})^T \right) \! \right) \! = \! \text{trace} (\mathbf I) \! = \! \text{dim}(\delta \mathbf T^{\star}). 
\end{align}
That is, if the solver is consistent, the expectation of NEES should be equal to the dimension of the optimization variable. If the expectation of NEES is far higher than the dimension, the estimator is over-confident (i.e., the computed covariance is less than the ground-truth). Conversely, it is conservative. 

\begin{figure} [t]
	\centering
	\includegraphics[width=8.8cm]{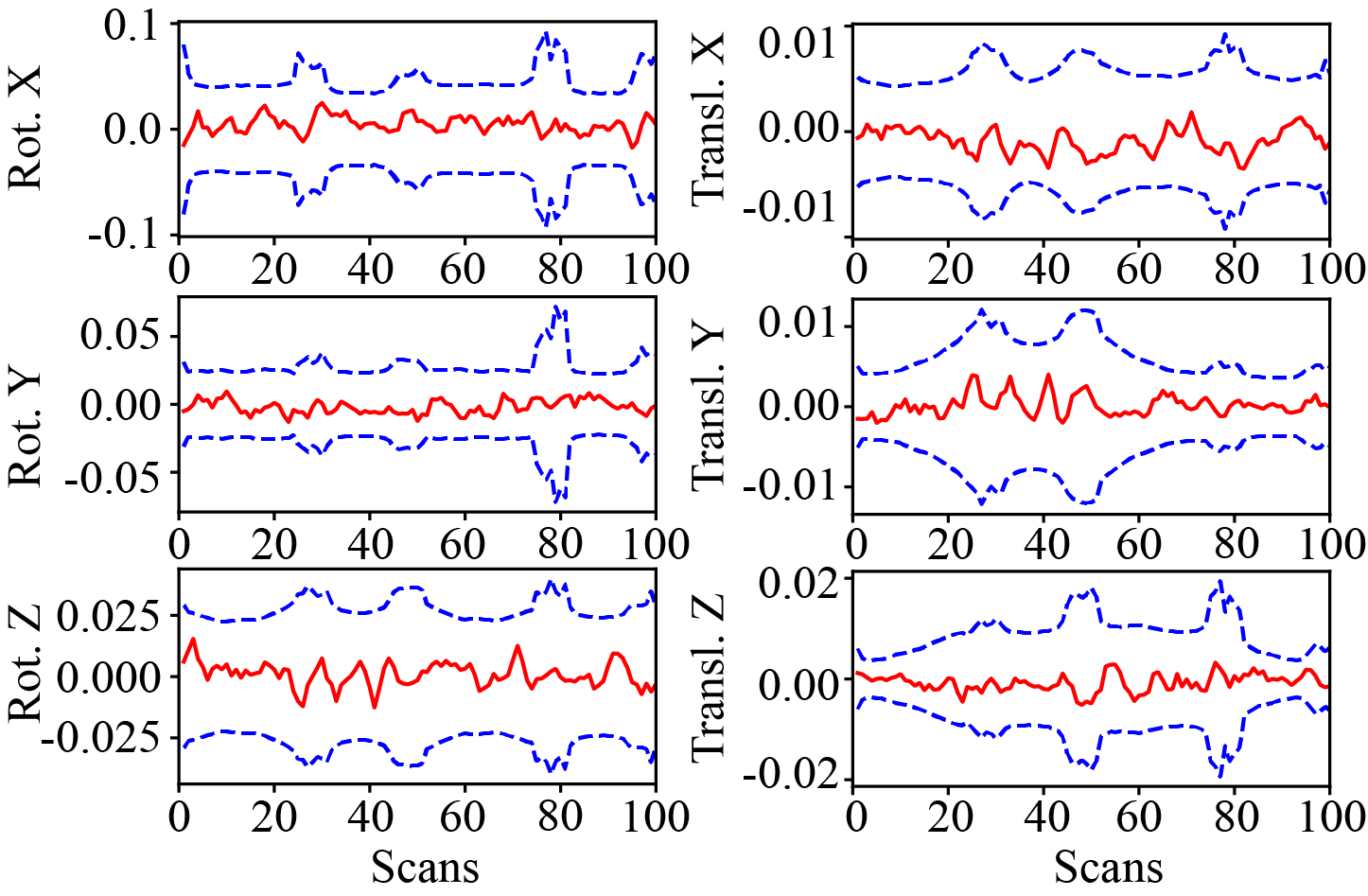}
	\caption{The error (red) of rotation (deg) and position (m) with $3\sigma$ bounds (blue) for one simulation run.}
	\label{3sigma}
\end{figure}

In practice, the expectation of NEES is evaluated by Monte Carlo method, where the NEES is computed for many runs and then averaged to produce the empirical expectation. 
\begin{align*}
	\bar{\eta} = \frac{1}{N} \sum_{i=1}^{N} \eta^{(i)}
\end{align*}
% \forall j \in \{ 1, \cdots, M_p\}
where $\eta^{(i)}$ is the NEES computed at the $i$-th Monte Carlo run. 

To conduct the Monte Carlo evaluation, we simulate a 16-channel lidar along a rectangular trajectory in a cuboid semi-closed space shown in Fig. \ref{fig simulation}. The size of the space is 30 m $\times$ 20 m $\times$ 8 m and the length of trajectory is about 92 m. 100 scans are equally sampled on the trajectory and the number of points in each scan is 28,800. {To simulate realistic measurements, each point is corrupted with an independent isotropic Gaussian noise with multiple standard deviations $\sigma_p \in \{0.05, 0.10, \cdots, 1.00\}$ m and for each value of the standard deviation {$\sigma_p$}, we performed 100 Monte Carlo experiments, leading to a total number of 2000 experiments.} In each run, we compute the optimal pose estimate from the {Supplementary (Algorithm 1)} with the same parameters specified in Section \ref{implementation} and the covariance matrix from (\ref{pose_convariance}). The initial trajectory required by {the algorithm} is obtained by perturbing the ground-true trajectory with a Gaussian noise with standard deviation {$\delta \boldsymbol \phi=2$ deg and $\delta t = 0.1$ m} on each pose. To avoid unnecessary errors, we use the ground-true plane association across different scans and ignore the in-frame motion distortion in the simulation. 

Fig. \ref{3sigma} shows orientation and position errors with the corresponding 3$\sigma$ bounds in one Monte Carlo experiment with $\sigma_p = 0.05$ m. As can be seen, the pose estimation errors are very small and they all remain within the $3\sigma$ bounds very well, which suggests that our new method is consistent. 

% The full Monte Carlo evaluation is presented in Fig. \ref{nees}, which shows the NEES averaged over all the 100 runs for rotation, translation, and the full pose. As can be seen, the average NEES for rotation and translation are very close their respective dimension 3 and the average NEES for the full pose is very close to 6, the pose dimension. These results confirm that our approach is consistent. 

Furthermore, we test the consistency of our BA method under different levels of point noise, where the standard deviation of a point noise ranges from $\sigma_p = 0.05 $ m to $1$ m. The results are shown in Fig. \ref{nees}(a) for the NEES averaged over 100 runs for each noise level and in Fig. \ref{nees}(b) for the average pose error. For better visualization, the average NEES is normalized by the pose dimension (i.e., 600 for 100 poses on the trajectory) in Fig. \ref{nees}(a). As can be seen, the normalized average NEES is very close to one, which suggests that our method is consistent, when the point noise is up to 0.3 m. Beyond this noise level, the first order approximation in (\ref{eq:first_order_approx}) no longer holds, which undermines the accuracy of the computed covariance. We should note that this noise level rarely occurs in actual lidar sensors, which are well below 0.1 m. Moreover, from Fig. \ref{nees}(b), we can see that our method produces accurate pose estimation even when the point noise are unrealistically large (up to 1 m, see {Fig.} \ref{nees}(c) for the point cloud map at this point noise level). 

\begin{figure} [t]
	\centering
	\includegraphics[width=8.8cm]{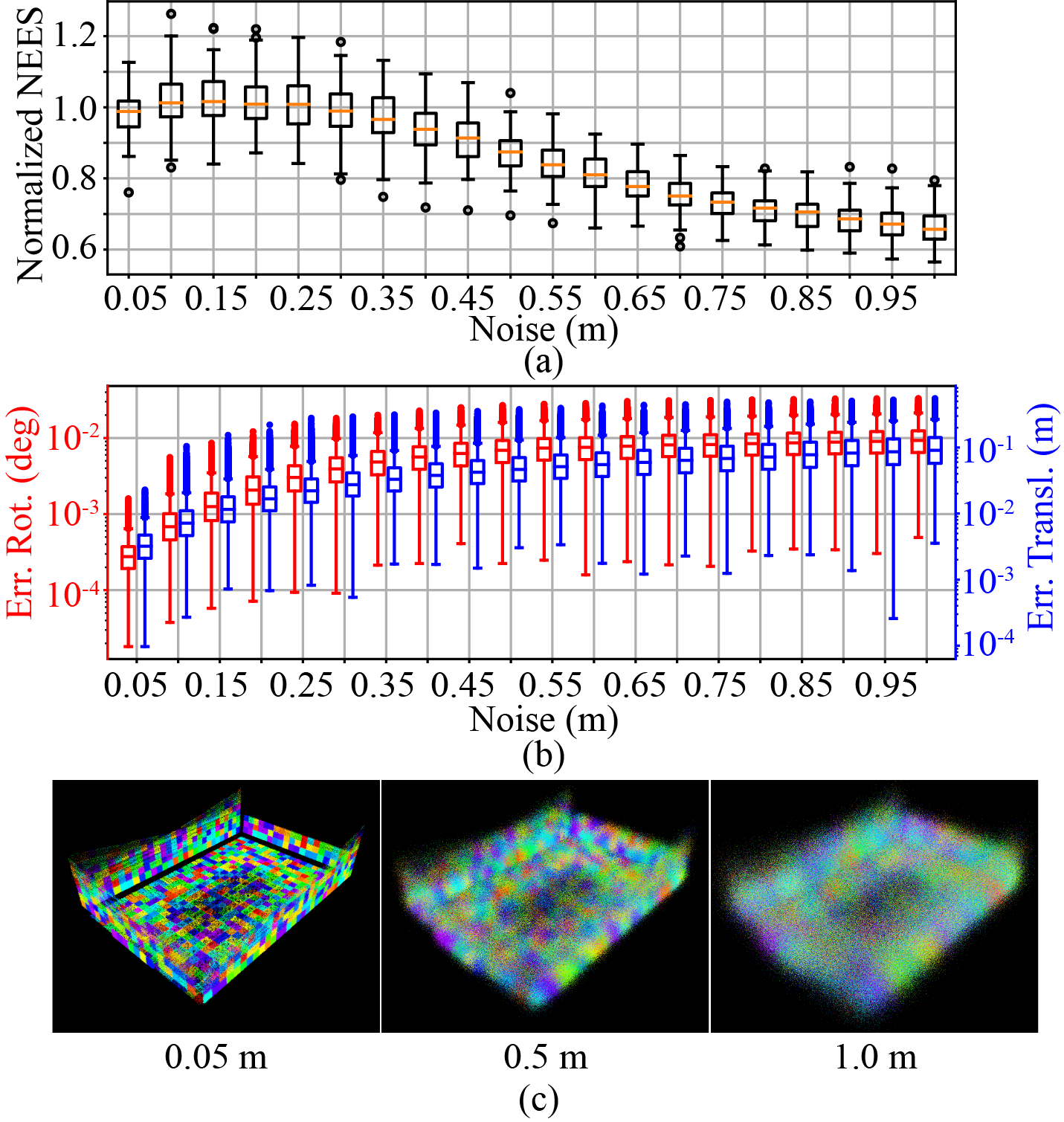}
	\caption{(a) The normalized NEES averaged over 100 Monte Carlo runs at different point noise levels. The NEES is normalized by the pose dimension (i.e., 600) for better visualization. (b) The rotation (red) and translation (blue) errors at different point noise levels. (c) The point cloud map with ground-true poses at noise levels $\sigma_p = $ 0.05m, 0.5m and 1m, respectively.
	}
	\label{nees}
\end{figure}

% We choose four snapshots in Fig. \ref{nees}(c), which shows the registered point cloud at convergence when the noise level is \hl{XXX}, respectively. As can be seen, the registered global point agree very well even though the noise level is unrealistically high. 

\section{Benchmark Evaluation} \label{benchmark}
In this section, we compare our method with other multi-view registration methods for lidar point clouds. The experiment will be divided into two parts: Section \ref{virtual_point_cloud} evaluates all methods with known feature association on {synthetic} point clouds, and Section \ref{real_dataset} evaluates the overall BA pipeline including both optimization solver and feature association on various real-world open datasets.

To verify the effectiveness of our method, we compare it with four state-of-the-art methods that focus on the lidar bundle adjustment (or similar) problem: Eigen-Factor (EF) \cite{ferrer2019eigen}, BALM \cite{liu2021balm}, Plane Adjustment (PA) \cite{zhou2021pi}, and BAREG \cite{huang2021bundle}. Among them, EF\footnote{\url{https://gitlab.com/gferrer/eigen-factors-iros2019}}, BALM\footnote{\url{https://github.com/hku-mars/BALM}}, BAREG\footnote{\url{https://hyhuang1995.github.io/bareg/}} are open sourced, so we use the available implementation on Github. PA is not available anywhere, so we re-implemented it in C++. To reduce the time cost of PA, we used the reduced Jacobian and residual technique in \cite{zhou2020efficient} (we derived it based on the cost function in Equation (10) of \cite{zhou2021pi}), which avoids the enumeration of each individual point. The re-implemented PA is solved by the Ceres solver with ``DENSE\_SCHUR" \cite{AgarwalCeresSolver2022}, which leverages the Schur complement trick to reduce the linear equation dimension at each optimization iteration. To better exploit the separable structure reducing the solving time, {we also compare with PA with inner iterations enabled in Ceres (denoted as ``PA (inner)")}.
% we also enabled the inner iterations in Ceres for PA.

For the solver parameters, our method and the re-implemented PA {(and its variant PA (inner))} use the parameters specified in {Section \ref{implementation} and \cite{zhou2020efficient}, respectively}, while EF, BALM and BAREG use their default parameters as available on their open source implementation. All methods use the same termination condition shown in Sec. \ref{implementation} (i.e. maximal iteration number below 200, rotation update below $10^{-6}$ rad, and translation update below $10^{-6}$ m), except for EF, which we found it converges too slowly and hence set the maximal iteration steps to 2000. In addition to the open source version of BALM (denoted by BALM), which samples only three points from each plane to lower the computation load, we also evaluated another vision (denoted by {BALM (full)}) which keeps all the points on a plane and use the same default parameters as its open sourced version. All solvers use the same initial pose trajectories detailed later.

\subsection{{Synthetic} point cloud}\label{virtual_point_cloud}

\begin{figure} [t]
    \centering
    \includegraphics[width=8.8cm]{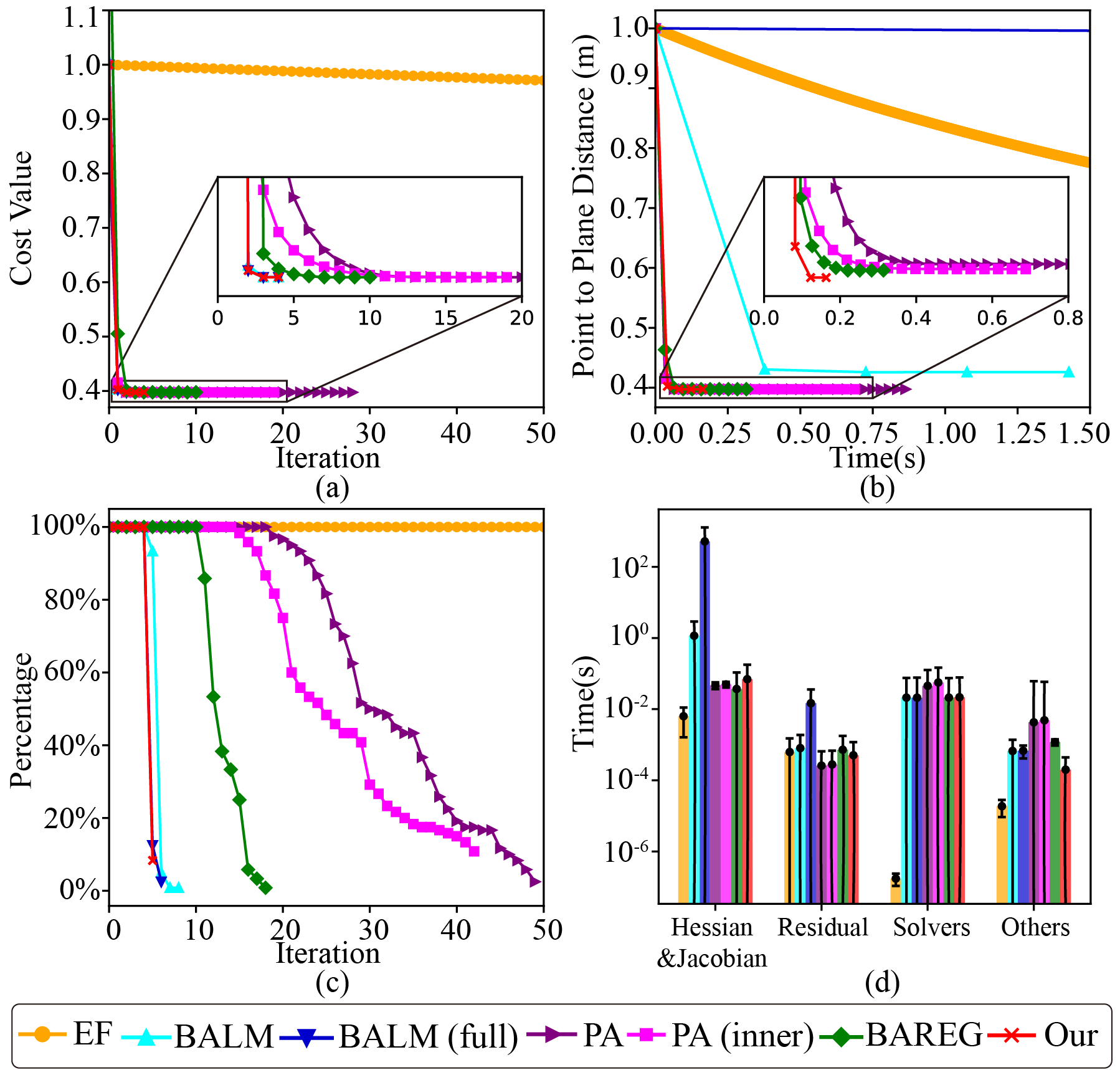}
    \caption{Convergence of different methods for BA optimization. (a) Cost value versus iterations in one repeat experiment with the nominal settings $M_f=100$, $M_p=100$, $N=100$ and initial pose error $10\times$. (b) Point-to-plane distance versus optimization time in one repeat experiment. (c) Iteration steps experienced by each method in all repeat experiments (i.e., 10) of all scenes (i.e., 21). The y-axis value represents how many experiments out of the 210 total experiments has experienced the iteration number indicated by the x-axis. (d) Breakdown of time spent on each iteration of all BA methods. The time is averaged among all experiments that all methods have participated.}
    \label{benchmark1_converge}
\end{figure}
%{except the ones with initial pose error $50\times$ and $100\times$}

To verify the effectiveness of the optimization solvers and their scalability to the number of pose $M_p$, number of feature $M_f$, and number of points $N$ per feature, we design a point-cloud generator which generates $M_f$ random planes and $M_p$ lidar scans at random poses. Each pose corresponds to one group of point-cloud whose number of points on each plane is $N$. Hence, there are totally $NM_f$ points at each scan. We use the ground-true plane association provided by the simulator. To mimic the real lidar point noises, we also corrupt the points sampled on each plane by an isotropic Gaussian noise with standard deviation $\sigma_p=0.05$m, the typical noise level for existing lidar sensors. The initial poses are perturbed from the ground-true poses with errors randomly sampled from a Gaussian distribution. The base standard deviation of the Gaussian distribution is $\| \delta \boldsymbol \phi \|=0.1$ deg for rotation and $\| \delta \mathbf t\| = 1$ cm for translation. In the nominal settings, $M_f =  M_p = N = 100$ and the initial pose error standard deviation is $10\times$ the base value. 
From the nominal settings, we enumerate each of the $M_f, M_p$ and $N$ at values $\{10, 30, 100, 300, 1000, 3000\}$ and the initial pose error standard deviation at values {$1\times$, $5\times$, $10\times$, $15\times$, $20\times$, $25\times$} of the base value to investigate the performance of each solver at different scales. 
This makes a total number of 21 scenes. In each scene, the experiment is repeated for 10 times with separately sampled poses, planes, and point noises, leading to a total 210 experiments.
% The initial pose trajectories for all the optimization methods are obtained by perturbing the ground-true poses by a Gaussian noise with standard deviation {$\delta \boldsymbol \phi=2$ deg and $\delta t = 0.1$ m}.

\subsubsection{Convergence}

\begin{figure} [t]
	\centering
	\includegraphics[width=1.0\linewidth]{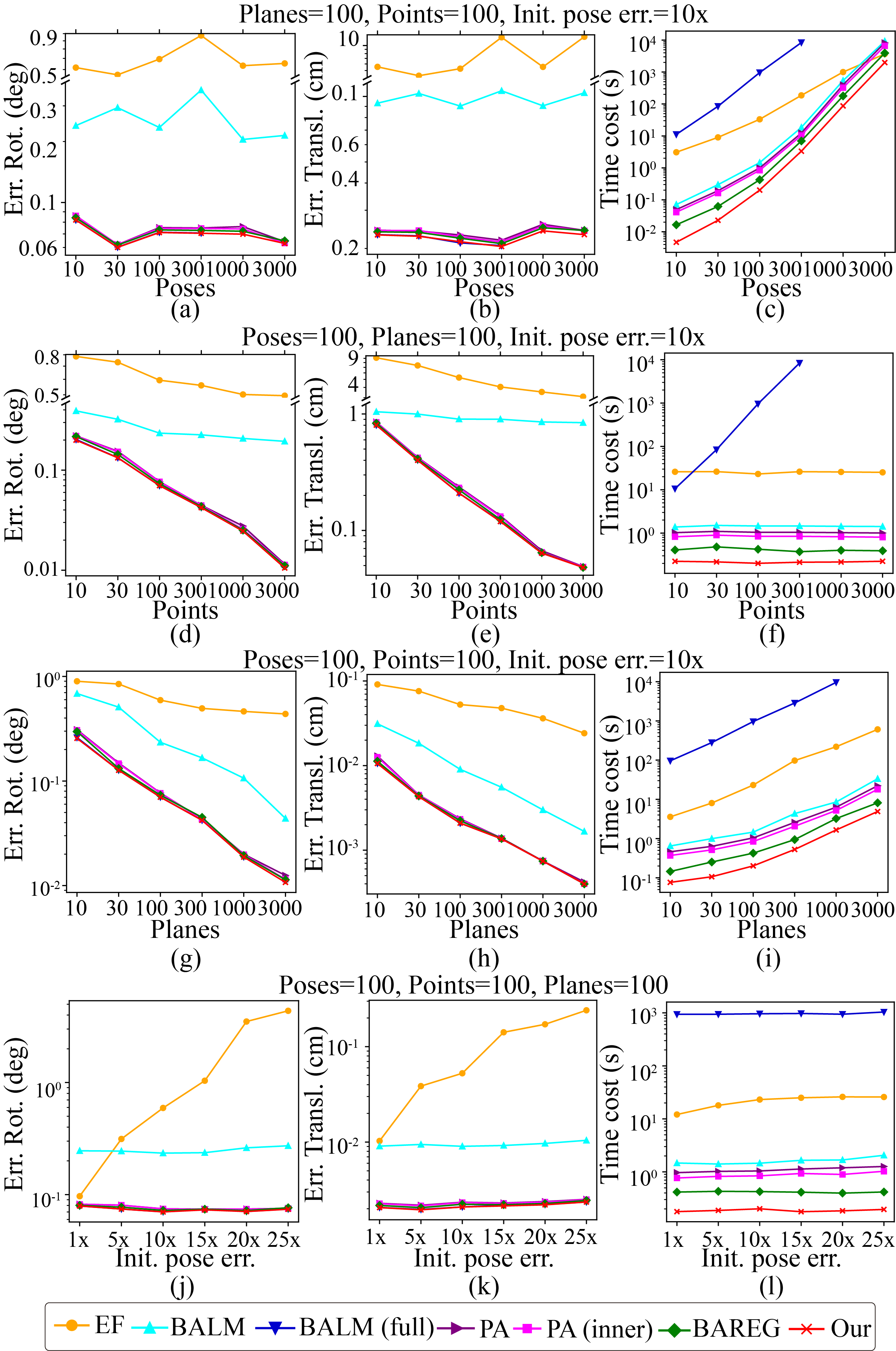}
	\caption{Benchmark results on synthetic point cloud.}
	\label{benchmark1}
\end{figure}

First we investigate the convergence performance of all methods. Fig. \ref{benchmark1_converge}(a) and (b) respectively shows the convergence of cost and point-to-plane distance in one repeat experiment with the nominal settings (i.e., $M_f=M_p=N=100$ and initial pose error $10\times$). Since different method uses different cost function, to compare them in one figure, the cost value of each method is normalized by its initial cost and then it is re-based such that the converged cost value of all methods are aligned at the same value. {Similarly, we normalize the point-to-plane distance by its initial value as well, which is valid to do because all methods have the same initial pose leading to the same initial point-to-plane distance}. As can be seen, EF converes rather slowly and requires the most number of iterations. This is because EF optimizes a cost function similar to ours in (\ref{BA-formulation-reduced}), which is essentially a quadratic function, but uses only the gradient information for optimization. Indeed, slow convergence of the gradient descent method on a quadratic cost function is a very typical phenomenon \cite{boyd2004convex}. PA, {PA (inner)}, and BAREG converge fast at the beginning but slowly when approaching the final convergence value. This is because PA optimizes both the plane parameters and scan poses, leading to a very large number of optimization variables that significantly slow down the speed at convergence. {PA (inner) converges faster than the original PA due to the inner iteration, but still slower than our method}. For BAREG, the empirical fixation of plane parameters also causes the optimization to slow down. In contrast, BALM, {BALM (full)} and our method eliminates the plane parameters exactly and the resultant optimization problem is only in dimension of the pose number. Further leveraging the exact Hessian information in their optimization update, BALM, {BALM (full)} and our method converge in a few iterations, which often represent the fastest convergence.

Fig. \ref{benchmark1_converge}({c}) shows the iteration experienced by each BA optimization method, where for each data point, the y-axis value represents how many out of the total experiments experienced the iteration number indicated by the x-axis. As can be seen, the overall trend agrees with the results in Fig. \ref{benchmark1_converge} (a) very well: our proposed method and {BALM (full)} require only four or five iterations, while BAREG, PA, and {PA (inner)} require up to {20, 40, and 50} iterations, respectively. EF requires even more iterations beyond 100. 

Fig. \ref{benchmark1_converge}(d) shows the computation time in each iteration. As can be seen, EF consumes the least time for each iteration due to the lack of Hessian computation and linear equation solving. {BALM (full)} consumes the most time since the computation of Jacobian, Hessian and residuals require to enumerate each individual point, leading to a complexity of $O(N^2)$. The other methods, PA,  {PA (inner)}, BAREG, and ours, consume similar time for each iteration.

\subsubsection{Accuracy}

% Fig. \ref{benchmark1}(a)-(f) shows the statistic values of the pose errors for scenes with different $M_p$. As can be seen, the errors of rotation and translation RMSE decrease monotonically with the number of features and and the number of points on each feature, which creates more constraints to the pose estimation. The accuracy of BALM does not improve with the point number since it always samples three points from each plane, thus cannot benefit from the increased point measurements. Due to the same reason, the overall accuracy of BALM is also lower than {BALM (full)}. EF has even lower accuracy than BALM due to the very slow convergence, the solver did not fully converge after the maximum iterations step (i.e., 500). On the other hand, {BALM (full)}, PA, BAREG, and our method converge well and achieve the highest accuracy with very little differences due to the use of all point measurements. 

Fig. \ref{benchmark1}({a,b,d,e,g,h,j,k}) shows the statistic values of the pose estimation accuracy in terms of RMSE. In each subplot, we fix three parameters of $M_f, M_p$, $N$ and initial pose error at the nominal values and change the fourth parameter to investigate its effect on the pose accuracy. Since the error of EF is much larger than the others, we used a broken y-axis to better display all the RMSE. As can be seen, overall the accuracy increases with the points per plane $N$ (in ({d}) and ({e})) or number of plane features $M_f$ (in ({g}) and ({h})) since both increases the number of pose constraints. In contrast, no such monotonic accuracy improvement is found for the number of poses (in (a) and ({b})) as the pose number increases because increasing the pose number itself does not gives more pose constraints. Likewise, the accuracy also remains similar for different initial poses error for all methods except EF, which did not converge at the maximum iteration number. Relatively speaking, our proposed method and {BALM (full)} achieves the same highest accuracy, since they essentially optimizes the same cost using the same exact Hessian information. The next best methods are BAREG, PA, and {PA (inner)}. While optimizing the same point to plane distance with our method (and {BALM (full)}), PA has significantly more optimization variables, which cause a much slower convergence where the solution is still slightly premature at the preset iteration number. {Although PA (inner) has used inner iterations to alleviate this problem, its iterations are still larger than our methods and BAREG.} The next accurate method is BALM, which samples only three instead of all points (as in our method, {BALM (full)}, BAREG, PA, and {PA (inner)}) and hence has higher RMSE. Finally, EF has the highest RMSE due to the very slow convergence, the solution is much premature even at the preset iteration number.

\subsubsection{Computation time}
Finally, we show the total computation time of different solvers at different feature number $M_f$, pose number $M_p$, point number $N$ and initial pose error. The results are shown in Fig. \ref{benchmark1}({c,f,i,l}). As can be seen, the time consumption of all methods increases with the number of poses $M_p$ (see ({c})) and plane features $M_f$ (see ({i})), which is reasonable since more poses or planes lead to a higher optimization dimension or more number of cost items, respectively. On the other hand, as the point number $N$ increases {(see ({f}))}, the method {BALM (full)} increases rapidly since its time complexity involves $O(N^2)$ while the rest methods (including ours) do not increase notably since they do not need to evaluate every raw point. {For the effect of initial pose errors in (see (l)), they do not affect the solving time significantly. }

Relatively speaking, our method achieves the lowest total computation time in all cases due to the small number of iteration numbers {(Fig. \ref{benchmark1_converge}(c))} and low time-complexity per iteration {(Fig. \ref{benchmark1_converge}(d))}. The next efficient method is BAREG, which has very low time-complexity per iteration due to the empirical feature parameter fixation but significantly more iteration numbers due to the same reason. Compared with our method, PA {(and PA (inner))} has similar time complexity per iteration as discussed in Supplementary (Section IV), but requires more iterations to converge. Hence their time costs are a little higher than ours and BAREG. BALM requires more iterations than our method and {more time in each iteration due to the enumeration of the sampled points}. Collectively, it leads to a computation time higher than our method and also BAREG and PA {(and PA (inner))}. The slow convergence problem is more severe in EF, leading to an even higher computation time. Finally, the most time-consuming method is {BALM (full)}, which, although has very small iteration numbers, consumes large time in each iteration. 

% {Finally, it is noted from Fig. \ref{benchmark1} (l) that, for large initial pose errors (i.e., larger than $50\times$), the computation time of our method (and also PA and BALM) quickly increased above the BAREG. This phenomenon is not surprising as our method (and PA and BALM) are second order method, when the initial pose errors are large, the second order approximation would exhibit large approximation errors that slow down the convergence. BAREG overcame this issue by introducing an additional cost item that align a local plane normal to the global plane normal. This additional cost could quickly align the initial pose hence accelerating the optimization especially at the first few iterations.}

\vspace{-0.2cm}
\subsection{Real-world datasets}\label{real_dataset}

\begin{table*}[ht]
\caption{Absolute trajectory error (RMSE,meters) for different methods.}
\centering
{
\begin{tabular}{clrrrrrrrrrrrr}
    \toprule
    Datasets & Sequence & ICP & GICP & NDT & EF & BALM & {PA} & PA & BAREG & Ours & Ours & Ours\\
    &&& &&& && {(inner)} && (float) & (edge)& \\
    \midrule
    \multirow{6}{*}{Hilti} 
    & Basement1     & 0.058 & 0.063 & 0.076 & 0.047 & 0.042 & {0.038} & 0.036 & 0.040 & \textit{0.0359} & {0.0361} & \textbf{0.0353} \\
    & Basement4     & 0.084 & 0.089 & 0.098 & 0.071 & 0.058 & {0.048} & 0.045 & 0.054 & \textit{0.0444} & {0.0448} & \textbf{0.0443} \\
    & Campus2       & 0.105 & 0.109 & 0.124 & 0.080 & 0.066 & {0.058} & 0.054 & 0.063 & 0.0535 & {\textbf{0.0530}} & \textit{0.0531} \\
    & Construction2 & 0.108 & 0.104 & 0.113 & 0.086 & 0.068 & {0.060} & 0.059 & 0.063 & \textit{0.0563} & {0.0577} & \textbf{0.0553} \\
    & LabSurvey2    & 0.066 & 0.069 & 0.072 & 0.046 & 0.025 & {0.019} & 0.019 & 0.023 & \textit{0.0185} & {0.0189} & \textbf{0.0181} \\
    & UzhArea2      & 0.182 & 0.191 & 0.211 & 0.161 & 0.141 & {0.122} & 0.121 & 0.127 & 0.1205 & {\textbf{0.1102}} & \textit{0.1171} \\
    \midrule
    \multirow{9}{*}{VIRAL}
    & eee01 & 0.159 & 0.163 & 0.172 & 0.102 & 0.073 & {0.052} & 0.040 & 0.061 & \textit{0.0390} & 0.0401 & \textbf{0.0382} \\
    & eee02 & 0.153 & 0.154 & 0.163 & 0.092 & 0.062 & {0.043} & 0.037 & 0.057 & \textit{0.0362} & 0.0378 & \textbf{0.0356} \\
    & eee03 & 0.171 & 0.175 & 0.180 & 0.113 & 0.081 & {0.056} &  0.053 & 0.068 & \textit{0.0522} & 0.0548 & \textbf{0.0517} \\
    & nya01 & 0.139 & 0.136 & 0.163	& 0.107 & 0.082 & {0.042} & 0.038 & 0.054 & \textit{0.0368} & 0.0372 & \textbf{0.0362} \\
    & nya02 & 0.160 & 0.159 & 0.124	& 0.097 & 0.067 & {0.050} & 0.048 & 0.061 & 0.0474 & \textit{0.0472} & \textbf{0.0468} \\
    & nya03 & 0.142 & 0.143 & 0.146	& 0.085 & 0.074 & {0.044} & 0.042 & 0.067 & \textit{0.0418} & 0.0425 & \textbf{0.0413} \\
    & sbs01 & 0.133 & 0.142 & 0.147	& 0.083 & 0.077 & {0.052} & 0.043 & 0.068 & \textit{0.0397} & 0.0404 & \textbf{0.0385} \\
    & sbs02 & 0.127 & 0.127 & 0.121	& 0.094 & 0.062 & {0.040} & 0.039 & 0.059 & \textit{0.0378} & 0.0393 & \textbf{0.0377} \\
    & sbs03 & 0.146 & 0.149 & 0.150	& 0.108 & 0.072 & {0.051} & 0.046 & 0.068 & 0.0440 & \textit{0.0432} & \textbf{0.0427} \\
    \midrule
    \multirow{4}{*}{UrbanLoco}
    & 0117   & 1.382 & 1.364 & 1.372 & 0.728 & 0.625 & {0.525} & 0.506 & 0.594 & {\textit{0.4964}} & {0.5324} & \textbf{0.4956} \\
    & 0317   & 1.384 & 1.299 & 1.289 & 0.878 & 0.732 & {0.661} & 0.657 & 0.682 & {0.6491} & {\textit{0.6449}} & \textbf{0.6488} \\
    & 0426-1 & 1.436 & 1.457 & 1.566 & 1.014 & 0.875 & {0.708} & \textit{0.689} & 0.733 & {0.6891} & {0.7135} & \textbf{0.6886} \\
    & 0426-2 & 1.676 & 1.693 & 1.543 & 1.113 & 0.924 & {0.864} & 0.837 & 0.905 & {\textit{0.8322}} & {0.8536} & \textbf{0.8223} \\
    \midrule
    Average && 0.411 & 0.410 & 0.412 & 0.268 & 0.221 & {0.186} & 0.179 & 0.203 & {\textit{0.1775}} & {0.1826} & \textbf{0.1763} \\
    \bottomrule
\end{tabular}
}
\label{benchmark2 ate}
\end{table*}

\begin{table*}[ht]
\caption{Occupied cells of point-cloud map for different methods.}
\centering
{
\begin{tabular}{clrrrrrrrrrrr}
\toprule
Datasets & Sequence & ICP & GICP & NDT & EF & BALM & {PA} & PA & BAREG  & Ours & Ours  & Ours \\
&&& &&& && {(inner)} && (float) & (edge)& \\
&  & (inc.)  & (inc.) & (inc.) & (inc.) & (inc.) & (inc.) & (inc.) & (inc.) & {(inc.)} & {(inc.)} & (base) \\
\midrule
\multirow{6}{*}{Hilti} 
& Basement1     & +20300 & +20954 & +21354 & +16692 & +6285 &  {+963} & +332 & +5864 & {+\textit{132}} & {+257} & \textbf{391962} \\
& Basement4     &  +7826 &  +7283 &  +8178 &  +6683 & +4762 & {+1028} & +223 & +3752 & {+\textit{112}} & {+197} & \textbf{558823} \\
& Campus2       & +14459 & +15511 & +21146 &  +8028 & +2863 &  {+977} & +248 & +2862 & { +68} & {-\textbf{97}} & \textit{1319482} \\
& Construction2 &  +6235 &  +9371 & +10032 &  +6397 & +1789 & {+1047} & +394 &  +986 & {\textit{ +95}} & {+181} & \textbf{979614} \\
& LabSurvey2    &  +1680 &  +3141 &  +6331 &  +5043 & +1375 &  {+410} & +210 & +1228 & {\textit{ +83}} & {+204} & \textbf{139682} \\
& UzhArea2      &  +9490 &  +9623 & +10832 &  +6371 & +2688 &  {+734} & +484 & +2785 & {\textit{+102}} & {+344} & \textbf{628951} \\
\midrule
\multirow{9}{*}{VIRAL} 
& eee01 & +43185 & +43439 & +44578 & +22731 & +2564 &  {+996} & +392 & +1321 & {\textit{ +85}} & {+289} & \textbf{1166482} \\
& eee02 & +10339 & +14573 & +15848 &  +6985 & +5938 & {+1538} & +177 & +5635 & {\textit{ +91}} & {+181} & \textbf{892168} \\
& eee03 &  +8584 &  +9419 &  +7418 &  +5720 & +2016 & {+1823} & +286 & +1060 & {\textit{+193}} & {+630} & \textbf{594921} \\
& nya01 & +53004 & +56370 & +48669 & +26087 & +7368 & {+1717} & +457 & +4246 & {\textit{ +46}} & {+585} & \textbf{571365} \\
& nya02 & +38056 & +37718 & +38435 & +24752 & +4710 & {+1980} & +692 & +3902 & {+238} & {\textit{+232}} & \textbf{572960} \\
& nya03 & +14282 & +13896 & +16325 & +10688 & +5922 & {+2178} & +308 & +2614 & {\textit{+172}} & {+446} & \textbf{562583} \\
& sbs01 & +10069 & +12196 & +16597 &  +9635 & +4224 & {+2056} &+1064 & +3691 & {\textit{+319}} & {+717} & \textbf{794228} \\
& sbs02 & +16573 & +16446 & +21046 & +10577 & +9278 & {+3451} & +488 & +5238 & {\textit{ +95}} & {+502} & \textbf{808235} \\
& sbs03 & +12257 & +11154 &  +8974 &  +4682 &  +877 &  {+1492} & +687 &  +763 & {+481} & {\textit{+332}} & \textbf{867174} \\
\midrule
\multirow{4}{*}{UrbanLoco}
& 0117   & +46718 & +47572 & +50969 & +16327 &  +7237 & {+3420} & +1016 & +5412 & {\textit{ +98}} & {+1987} & \textbf{1743775} \\
& 0317   & +37635 & +33676 & +41367 & +20072 & +13102 & {+4783} & +1453 & +8521 & {\textit{+103}} & {+1011} & \textbf{1709823} \\
& 0426-1 &  +9165 & +10242 & +13695 &  +9539 &  +2331 & {+1364} &  +525 & +1026 & {\textit{ +33}} & {+1413} & \textbf{1632662} \\
& 0426-2 & +31870 & +30461 & +29568 & +13827 &  +3428 & {+2021} &  +799 & +4451 & {\textit{+472}} & {+1252} & \textbf{2176302} \\
\midrule
Average & & +21617 & +21002 & +22703 & +12146 & +4671 & {+1788} & +539 & +3439 & {\textit{+159}} & {+561} & \textbf{953215} \\
\bottomrule
\end{tabular}
}
\label{benchmark2 occupied}
\end{table*}

In this experiment, we conduct benchmark comparison on three real-world datasets. The first dataset is ``\emph{Hilti}" \cite{helmberger2021hilti} which is a handheld SLAM dataset including indoor and outdoor environments. We use the lidar data collected by Ouster OS0-64 in the dataset. The ground-true lidar pose trajectory  is captured by a total station or motion capture system. The second dataset ``\emph{VIRAL}" \cite{nguyen2021ntu} is collected on an unmanned aerial vehicle (UAV) equipped with two 16-channel OS1 lidars. One lidar is horizontal and the other is vertical. We will use the horizontal one in this experiment. The ground-true positions are provided by a Leica Nova MS60 MultiStation tracking a crystal prism on the UAV. The last dataset ``\emph{UrbanLoco}" \cite{wen2020urbanloco} is collected by a car driving on urban streets. The lidar is a Velodyne HDL 32E and the ground truth is given by the Novatel SPAN-CPT, a navigation system incorporating Real Time Kinematic (RTK) and precisional IMU measurements.

Two preprocessing are performed for all sequences: motion compensation and scan downsample. To compensate the points distortion caused by continuous lidar movements within a scan, we run a tightly-coupled lidar-inertial odometry, FAST-LIO2 \cite{xu2022fast}, which estimates the IMU bias (and other state variables) and compensates the point motion distortion in real-time. We kept all points in a scan whose distortion has been compensated by FAST-LIO2 and discard the odometry output. The processed data are then downsampled from the original 10 Hz to 2 Hz for all sequences. This is because the BA methods need to process all scans at once, a 10Hz scan rate causes prohibitively high computation load for all BA methods. The downsampling is also similar to the keyframe selection in common SLAM frameworks. %It is also the imitation of keyframe concept in real-world BA application. 

We compare our method with EF, BALM, PA, {PA (inner)} and BAREG. Noticing that the computation time of {BALM (full)} is prohibitively high due to the extremely large number of lidar points, we hence remove it from the benchmark comparison. For the rest methods, their solver parameters are kept the same for all sequences with values detailed in previous sections. 
% For feature association, the parameters are $l_{\text{max}} = 3, n_{\text{min}} = 20, \gamma = \frac{1}{25}$. For the parameter $L$, it is set to $1$ m for  ``\emph{Hilti}" and ``\emph{VIRAL}" and $2$ m for ``\emph{UrbanLoco}".

{For feature association, we use the adaptive voxelization proposed in BALM \cite{liu2021balm}, which registers all points in the world frame (using an initial trajectory) and recursively cuts the space into smaller sub-voxels until the sub-voxel contains only one feature (either plane or edge) that associates points from different scans. EF did not address the feature association problem and PA did not open relevant codes, so we use this method for them too. BAREG used a similar adaptive voxelization method but has its own implementation, so we retain its own implementation. All feature associations have the same set of parameters: the root voxel size $L=1$ m for ``\emph{Hilti}" and $L = 2$ m for ``\emph{VIRAL}" and ``\emph{UrbanLoco}", the maximum voxelization layer $l_{\text{max}} = 3$, the minimum number of points $n_{\text{min}}=20$ for a feature test, and the feature test thresholds $\gamma =\frac{1}{25}$.}

{The above feature association method is able to extract and associate both plane and edge features. Since the other BA methods, including EF, PA {(and PA (inner))}, and BAREG, are only designed for plane features, we use only plane features for them. For our  method, it is applicable to both plane and edge features, so we test two variants: the one with only plane features, denoted as ``Ours", for comparison with other BA methods, and the one with both plane and edge features,  denoted as ``Ours (edge)". Moreover,  besides the default implementation of our method with double-precision numbers, we test the stability of our method with single-precision floating number implementation, denoted as  ``Ours(float)". Note that all other BA methods were implemented with double-precision. }

In addition to the multi-view registration methods, we also compare with classic pairwise registration methods, including ICP, GICP, and NDT offered in PCL library.  We run the pairwise registration methods in an incremental manner, where each new scan is registered and merged to previous scans incrementally. To constrain the computation time, in each new scan registration, only the last 20 scans are used.  The pose estimation from the ICP is then used as the initial trajectory for feature association and optimization of the BA methods, including EF, BALM, PA, {PA (inner)}, BAREG, and ours.

\begin{figure} [t]
	\centering
	\includegraphics[width=8.5cm]{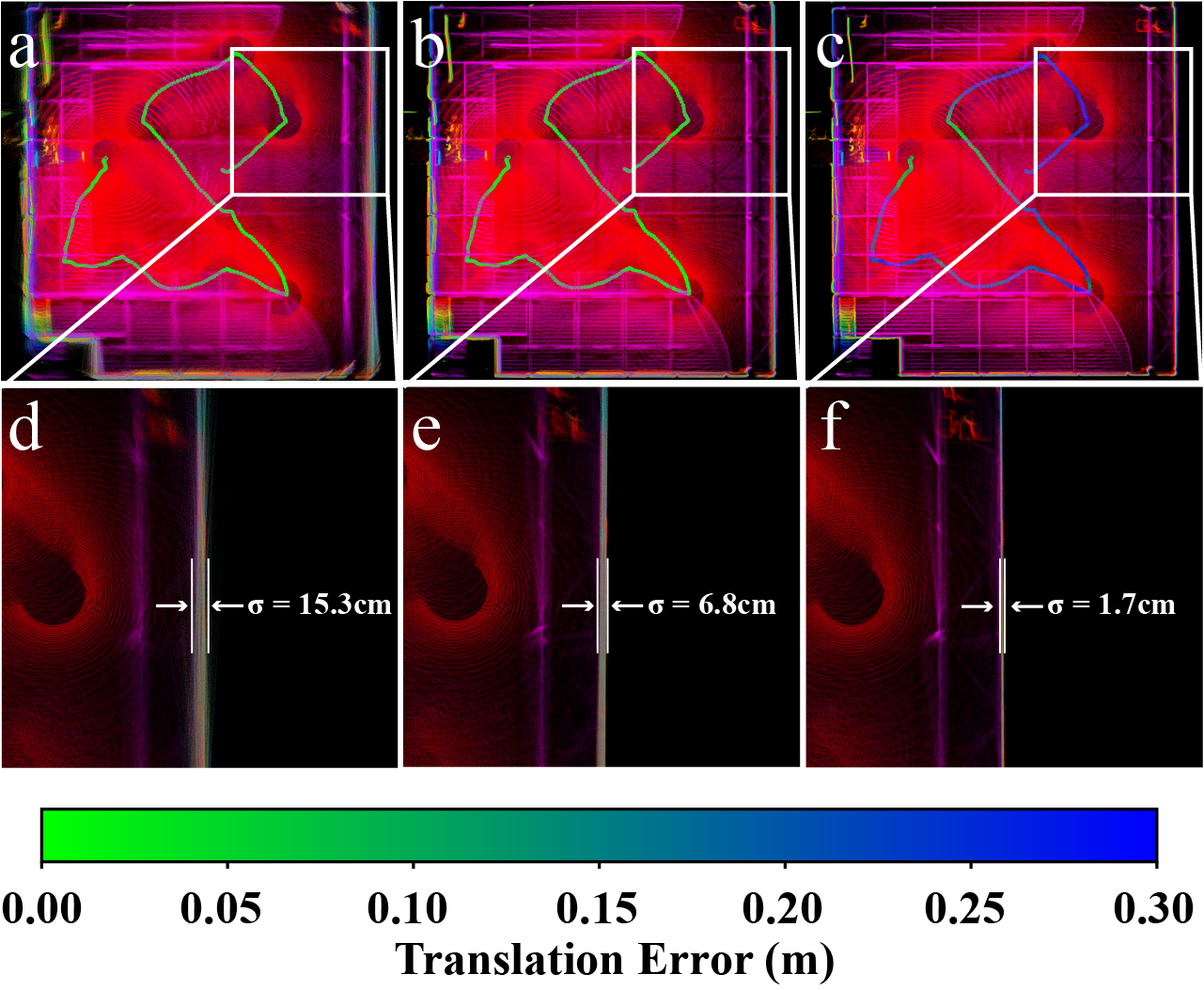}
	\caption{Point cloud map of the UzhArea2 sequence in ``\emph{Hilti}". (a) registered by ground-true pose trajectory. (b) registered by ground-true position with rotation optimized by our BA method. (c) registered by poses fully optimized by our BA method. (d), (e) and (f) points on one side wall in (a), (b) and (c), respectively.  }
	\label{uzh}
\end{figure}

\subsubsection{Accuracy} 

% This trend is in great agreement with the converged point-to-plane distances in Fig. \ref{fig:benchmark2_converge} (the lower point-to-plane distance, the better accuracy), while is a little inconsistent with the results on synthetic point cloud for BAREG.

Table \ref{benchmark2 ate} shows the ATE results. As can be seen, our method consistently achieves the best results in all 19 sequences {even with single-precision}. The next accurate method is {PA (inner)}, PA and BAREG, followed by BALM and EF. This trend is in great agreement with the results on synthetic point cloud shown in Section \ref{virtual_point_cloud}-2. In particular, our method achieves an accuracy within a few centimeters in all sequences of ``\emph{Hilti}" and ``\emph{VIRAL}", with only one exception (i.e., UzhArea2), which will be analyzed later. The centimeter level accuracy achieved by our method is at the same level of lidar point noises. Moreover, using only lidar measurements, our method achieved an average accuracy of $4.2$ cm on all VIRAL dataset sequences, which outperforms the accuracy $4.7$ cm reported in VIRAL-SLAM \cite{nguyen2021ntu} that fuses all data from stereo camera, IMU, lidar, and UWB. The accuracy on ``\emph{UrbanLoco}" is lower (analyzed later) than other datasets, but still outperforms the other BA methods. Finally, we can notice that the BA methods (i.e., EF, BALM, PA, PA (inner), BAREG, and ours) generally outperforms the pairwise registration methods (i.e., ICP, GICP and NDT) due to the full consideration of multi-view constraints. 

\vspace{-0.2cm}
When comparing among different variants of our method, the single-precision implementation has a lower accuracy than double-precision as expected, but it offers significant time savings as discussed later. The incorporation of edge features leads to no noticeable accuracy improvement. The accuracy difference with and without edge features are as small as 6 mm. This is because in real-world point clouds, edge features extracted based on local smoothness (e.g., \cite{zhang2014loam}) are very noisy because the laser pulse emitted by lidars can barely hit an edge exactly due to the limited angular resolution. The situation is further exacerbated when the edge is located at far or when the lidar has increased laser beam divergence, which creates many bleeding points behind an edge and degrades the edge points extraction more\cite{yuan2021pixel}. On the other hand, in real-world environments, edge features are often created by depth discontinuity at the edge of a foreground object, which meanwhile makes a good plane feature, so adding the edge feature does contribute many new effective constraints.

% Our proposed optimization method and the adaptive voxelization method in BALM \cite{liu2021balm} are applicable to both edge and plane features. However, we notice that in real-world point cloud data, edge features extracted based on local smoothness (e.g., \cite{zhang2014loam}) are very noisy because the laser pulse emitted by lidars can barely hit an edge exactly due to the limited angular resolution. The situation is even worsened when the edge is at far or when the lidar has increased laser beam divergence, which creates many bleeding points behind an edge and degrades the edge points extraction \cite{yuan2021pixel}. As a consequence, we found that in practice, adding the edge features does not contribute much to the accuracy. Moreover, since the other methods, including EF, PA, and BAREG are only designed for plane features,  for a fair comparison, we use only plane features in the following experiments for all methods. In fact, an edge is often created by an foreground object, which also makes a good plane feature, so dropping the edge feature does not reduce the number of constraints significantly. 

{It is noted that BAREG has an accuracy obviously lower than other methods (e.g., PA, PA (inner) and our method), which disagrees with results obtained previously from the synthetic data. The reason is that BAREG first extracts eigenvectors $\mathbf u_1$ and $\mathbf u_2$ ($\lambda_1 > \lambda_2 > \lambda_3$) of points corresponding to a plane feature in each local lidar scan. The two eigenvectors were assumed to be normal to the true plane normal and hence used to construct a cost item $\lambda_1 \left \| \mathbf R \mathbf u_{1} \cdot \mathbf n \right \|^2 + \lambda_2 \left \| \mathbf R \mathbf u_{2} \cdot \mathbf n \right \|^2$ in addition to the point to plane residual. The additional cost item could bias the optimization results if the extracted eigenvectors $\mathbf u_1$ and $\mathbf u_2$ are not accurate (i.e., they are not really perpendicular to the true plane normal), a presumption for the optimality of BAREG. Unfortunately, such optimality presumption did not hold well in real-world datasets, where the points density varies considerably: points on planes further from the sensor exhibit sparser distributions compared to those closer. This sparsity in distant planes leads to significant errors in the calculation of $\mathbf u_1$ and $\mathbf u_2$. Moreover, in real-world datasets, due to the imperfections of plane extraction, the extracted planes utilized for BA optimization may not be perfect planes (e.g., slightly curved walls or ground), and the point noise cannot be guaranteed isotropic Gaussian noise. All these factors contribute to errors in the extracted $\mathbf u_1$ and $\mathbf u_2$ and bias the optimization results.}

Now we investigate the performance degradation on ``\emph{UrbanLoco}" and the sequence UzhArea2 in ``\emph{Hilti}" more closely. For the ``\emph{UrbanLoco}" dataset, we found that the RTK ground-truth had some false sudden jumps, which contributes the large ATEs. This sudden jump may be caused by tall buildings in the crowded urban area which lowers the quality of the ground-truth. For the sequence UzhArea2, we register the point cloud with the ground-true pose trajectory and compare it with the point cloud registered with our BA method in Fig. \ref{uzh}. As can be seen, with the ground-true pose, points on the side wall are very blurry and points on the wall form a plane with standard deviation up to 15.3 cm (Fig. \ref{uzh}(d)); with the ground-true translation but with rotations optimized by our BA method, the points on the side wall are much thinner and form an apparent plane of standard deviation 6.8 cm (Fig. \ref{uzh}(e)); with poses fully optimized by our method, the points are even more consistent and the standard deviation is 1.7 cm (Fig. \ref{uzh}(f)). From these results, we suspect that the ground-truth may be affected by some unknown errors (e.g., marker position change during the data collection). Indeed, we found similar problem on this sequence also occurred in other works \cite{camurrihilti}. Moreover, the standard deviation of 1.7cm achieved by our method is exactly the ranging accuracy of the lidar sensor, which confirms that our method achieves a mapping accuracy at the lidar noise level as if the sensor had no motion. 

\subsubsection{Mapping quality} 

A significant advantage of the BA method is the direct optimization of the map consistency (i.e., point-to-plane residuals). To evaluate the map quality without a ground-true map, we adopt a method proposed by Anton {\it et al.} \cite{filatov20172d}. The method cuts the space into small cells and then counts the number of cells that lidar points occupy. The less the occupied cells, the higher the map quality. This indicator is intuitive: if points from different scans are registered accurately, they should agree with each other to the best extent, hence occupying the minimum possible number of cells. Based on this method, Table \ref{benchmark2 occupied} presents the number of occupied cell with size 0.1 m. {To better show the difference among different methods, the number of occupied cells are subtracted by our method for each sequence. We show the number of occupied cells by our method and the difference value of other methods.} As can be seen, our methods consistently achieved the best performance in all sequences and the next best is {PA (inner)}, PA, and BAREG. This trend also agrees with the ATE results very well.

\begin{figure*} [!t] 
    \centering
    \includegraphics[width=0.85\linewidth]{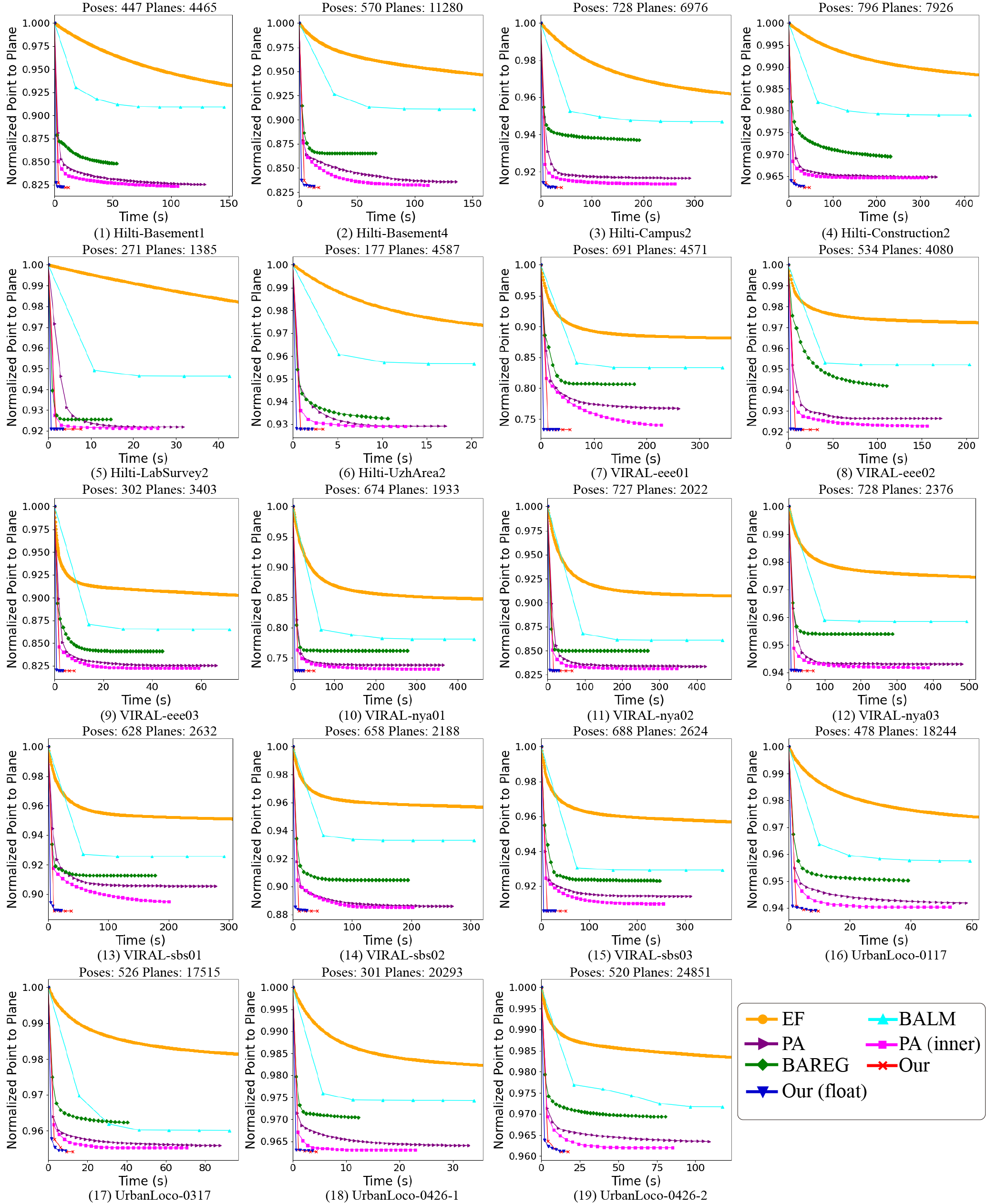}
    \caption{Point-to-plane distance versus optimization time in real-world datasets including Hilti, VIRAL, and UrbanLoco. All methods have the same initial pose (hence the same initial point-to-plane distance) and have their point-to-plane distance all normalized by the initial values.}
    \label{fig:benchmark2_converge}
\end{figure*}

\subsubsection{Computation time} 

Finally, we compare the computation time.  Since the pairwise registration methods, including ICP, GICP, and NDT, perform repetitive incremental registration at each scan reception, its computation time is very different from the BA methods that perform batch optimization on all scans at once. Therefore, we only compare the computation time of BA methods. {Fig. \ref{fig:benchmark2_converge} shows the convergence  of all methods and} Table \ref{benchmark2 time} shows the total optimization time. As can be seen, {when all using double-precision}, our method consumes the least computation time, about one fourth of the BAREG, one sixth of PA {and PA (inner)}, one eighth of BALM, and one twentieth of EF. The overall trend agrees well with the results on synthetic point cloud in Section \ref{virtual_point_cloud}-3 with explanations detailed therein. {Besides, our single-precision implementation reduces $40\%$ further optimization time while still outperforming the other BA methods as detailed in previous section. Finally, the inclusion of extra edge features increases the number of cost items, resulting in an increased optimization time.}

\subsubsection{{Plane merging}}

\begin{figure} [ht]
    \centering
    \includegraphics[width=0.95\linewidth]{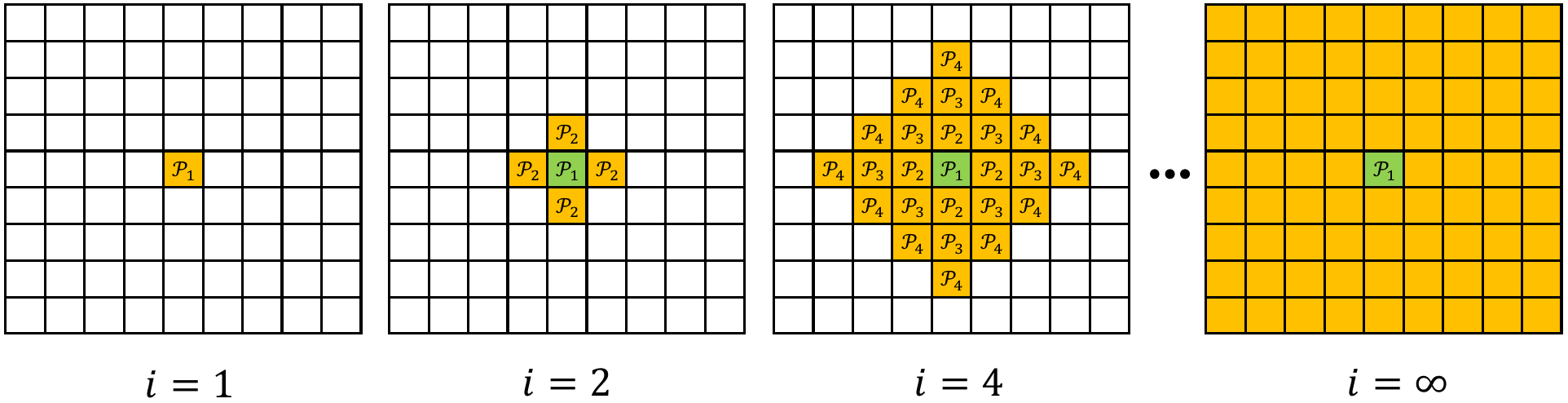}
    \caption{{Plane merging at degree $i$: ``$i=1$" indicates only merging the planes in the same root voxel. ``$i=2$" means merging maximum layers to $\mathcal P_2$ and ``$i=4$" means merging maximum layers to $\mathcal P_4$. ``$i=\infty$" means no boundary layer is specified, the merging can go as far as possible.}}
    \label{fig:plane-merging}
\end{figure}

We further evaluate the performance of all BA methods at different number of plane features. To change the number of planes in real-world datasets, we develop a merging procedure in addition to the adaptive voxelization introduced in the experiment setup above. Starting from the root voxels, the adaptive voxelization recursively cuts the space into smaller sub-voxels until the sub-voxel contains only one plane feature. Then the merging process merges planes in small sub-voxels into larger planes. The merging proceeds at different degrees denoted by $i$ (see Fig. \ref{fig:plane-merging}), where a plane is merged with planes within up to $i-1$ layers of neighboring root voxels. In the merging process, the two candidate planes $\boldsymbol{\mathcal{P}}_i$ and $\boldsymbol{\mathcal{P}}_j$ must satisfy

\begin{align}
	&\Big | \left\langle \mathbf n_i, \mathbf n_j \right\rangle  \Big |
	< \epsilon_1 
    \\d
	\Big | \left\langle \mathbf c_i - \mathbf c_j, \mathbf n_{i}\right\rangle - \frac{\pi}{2} & \Big | < \epsilon_2 
    \quad
	\Big | \left\langle \mathbf c_i - \mathbf c_j, \mathbf n_{j}\right\rangle - \frac{\pi}{2} \Big | < \epsilon_2 \label{two-plane-judge}
\end{align}
{where $\mathbf n$ and $\mathbf c$ are the normal vector and center of a plane respectively, symbol $\left\langle \cdot \right\rangle$ denotes the angle of two vectors, $\epsilon_1 = \epsilon_2 = 10^{\circ}$ are two constants. If the condition is not satisfied, the neighboring plane will not be merged.}

{Given a merging degree $i$, we repeatedly merge planes starting from a seed plane randomly selected from the plane list. A merged plane will be removed from the list to avoid duplicate merging. Such procedure produces a new list of planes whose size are at most $i\cdot L$ with $L=1$ or $2$ m being the root voxel size. Larger merging degree $i$ will lead to fewer number of planes but each with larger sizes.}

% We should note that the adaptive voxelization and merging method specified above are highly efficient and often produce planes with higher quality than existing plane extraction methods such as random Hough transform (RHT)\footnote{D. Borrmann, J. Elseberg, K. Lingemann, and A. Nuchter, "The 3d hough transform for plane detection in point clouds: A review and a new accumulator design," 3D Research, vol. 2, no. 2, pp. 1–13, 2011.}, random sample consensus (RANSAC)\footnote{https://pointclouds.org/} and region growing method \footnote{A. M. Araujo and M. M. Oliveira, "A robust statistics approach for plane detection in unorganized point clouds," Pattern Recognition, vol. 100, p. 107115, 2020.}. A detailed study of such method was presented in a preprint of our following-up work \hlr{[XXX]}. Evaluating our BA optimization method on this plane feature extraction and association method should be representative.

{Since the experimental results in the Sec. \ref{virtual_point_cloud} and \ref{real_dataset} have proved the PA with inner iteration outperforms the original PA, we use the PA with inner iteration by default.} The accuracy and computation time of all BA methods (including EF, BLAM, PA (inner), BAREG, and ours) at different merging degree $i$ are shown in Fig. \ref{fig:benchmark_merge}. The plane merging at different degrees leads to different computation time, so the time cost in the plot is the total time including adaptive voxelization, plane merging (if applicable), and BA optimization. As can be seen, our method consistently exhibits the highest accuracy and lowest time cost for all numbers of planes. Moreover, as the merging degree $i$ increases, the number of planes is decreased accordingly, leading to fewer planes that also reduce the optimization time of all BA methods. The reduction in optimization time is often larger than the time increment for merging, hence the total computation time still decreases with the merging degree. On the other hand, the pose RMSE of all methods all increase with the merging degree. This is because a larger merging degree introduces more bias to the optimization by merging planes not exactly on the same plane (e.g., slightly curved ground).

\begin{figure*} [!ht]
	\centering
	\includegraphics[width=0.85\linewidth]{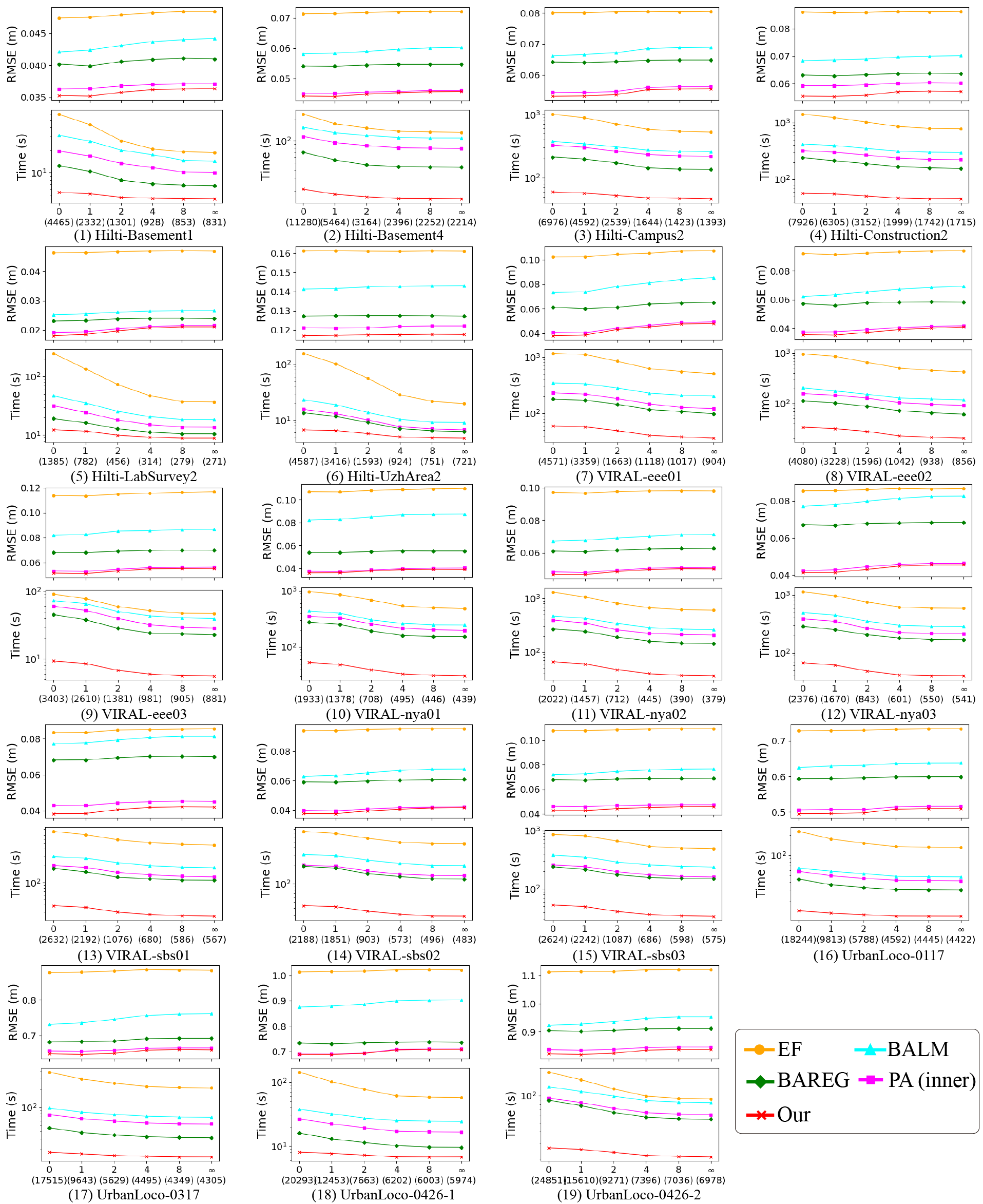}
	\caption{{The ATE and time cost of our method when merging planes at different degrees. The number $``0", ``1", ``2", ``4", ``\infty"$ on the X axis denotes the merging degree. The larger number in the parentheses below them are the number of planes corresponding to the merging degree.} }
	\label{fig:benchmark_merge}
    \vspace{-0.4cm}
\end{figure*}

% One limitation of adaptive voxelization is the fragmentation of planes into smaller voxels, which consequently incurs bring additional time costs when computing Hessian matrix. To address this issue, we have conducted a plane merging experiment, the details of which are described in the Section IV of the supplementary material. The results, presented in Fig. 19 of the supplementary material, demonstrate that our method maintains the best accuracy and the lowest time cost across varying numbers of planes. Furthermore, by setting a reasonable maximum for merging layers, the accuracy remains consistent, but the computation time reduces steadily.

% We can also notice that our method with merge outperforms that without merging. This is expected, as plane merging will reduce the number of planes (fewer cost item) and clusters (fewer point clusters $\mathbf C_{f_{ij}}$ in (\ref{BA-formulation-reduced-reduced})). 

%The Jacobin and Hessian matrices for different voxel are parallel, so we scheduled four threads to process relevant computation. Table \ref{benchmark2 time} shows that our method is the fastest. The accuracy of adaptive voxelization with merging and without merging is almost the same, but due to less plane correspondence to be optimized, the merging version can be more time-saving. 

\begin{table*}[t]
\caption{Optimization time for different methods}
\centering
\begin{threeparttable}
    {\begin{tabular}{llrrrrrrrrr}
    \toprule
    Datasets & Sequence & EF & BALM & {PA} & PA ({inner}) & BAREG & Ours (float) & Ours (edge) & Ours \\
    \midrule
    \multirow{6}{*}{Hilti} 
    & Basement1     &  297.68 & 145.72 & {129.39} & 106.08 &  52.99 & \textbf{ 7.20} & 12.07 & \textit{11.94} \\
    & Basement4     &  231.37 & 151.45 & {135.88} & 111.39 &  65.67 & \textbf{12.72} & 17.25 & \textit{17.01} \\
    & Campus2       &  989.37 & 352.72 & {290.78} & 261.39 & 191.87 & \textbf{27.09} & 40.02 & \textit{39.95} \\
    & Construction2 & 1415.18 & 412.00 & {335.70} & 313.23 & 231.48 & \textbf{33.04} & 47.34 & \textit{47.12} \\
    & LabSurvey2    &  244.86 &  42.47 & { 31.63} &  25.67 &  14.59 & \textbf{ 3.39} &  7.89 & \textit{ 7.64} \\
    & UzhArea2      &  153.43 &  20.25 & { 17.10} &  12.60 &  10.60 & \textbf{ 2.16} &  4.32 & \textit{ 4.08} \\
    \midrule
    \multirow{9}{*}{VIRAL}
    & eee01 & 1162.25 & 342.60 & {259.95} & 227.86 & 175.83 & \textbf{33.01} & 56.21 & \textit{55.22} \\
    & eee02 &  968.90 & 202.98 & {171.41} & 155.86 & 110.31 & \textbf{14.06} & 33.21 & \textit{32.34} \\
    & eee03 &   89.45 &  71.56 & { 66.11} &  59.02 &  44.10 & \textbf{ 3.27} &  8.17 & \textit{ 7.89} \\
    & nya01 &  972.81 & 438.09 & {364.18} & 351.14 & 276.37 & \textbf{31.19} & 51.78 & \textit{51.01} \\
    & nya02 & 1307.53 & 468.30 & {422.34} & 394.28 & 268.28 & \textbf{30.04} & 65.69 & \textit{65.19} \\
    & nya03 & 1134.21 & 493.29 & {479.64} & 385.79 & 287.19 & \textbf{39.26} & 67.73 & \textit{67.26} \\
    & sbs01 &  818.50 & 291.77 & {278.02} & 200.21 & 177.46 & \textbf{21.82} & 38.93 & \textit{37.80} \\
    & sbs02 &  738.91 & 304.65 & {268.42} & 201.61 & 193.68 & \textbf{27.45} & 42.54 & \textit{41.35} \\
    & sbs03 &  855.22 & 377.82 & {312.31} & 254.52 & 237.45 & \textbf{23.45} & 52.05 & \textit{51.55} \\
    \midrule
    \multirow{4}{*}{Urbanloco}
    & 0117   & 224.73 &  59.28 & { 58.18} & 52.80 &  39.08 & \textbf{ 8.73} &  9.92 & \textit{ 9.60} \\
    & 0317   & 380.98 &  92.11 & { 87.48} & 70.35 &  40.25 & \textbf{ 8.89} & 13.20 & \textit{12.38} \\
    & 0426-1 & 138.75 &  33.91 & { 32.92} & 22.90 &  12.29 & \textbf{ 3.70} &  5.77 & \textit{ 4.31} \\
    & 0426-2 & 174.40 & 117.36 & {108.79} & 85.26 &  80.44 & \textbf{14.47} & 17.92 & \textit{17.34} \\
    \midrule
    Average && 647.29 & 232.54 & {202.64} &171.11 & 132.10 & \textbf{18.15} & 31.16 & \textit{30.58} \\ 
    \bottomrule
    \end{tabular}}
\end{threeparttable}
\label{benchmark2 time}
\end{table*}

\section{Applications}
Bundle adjustment is the central technique of many lidar-based applications. In this section, we show how our bundle adjustment method can effectively improve the accuracy or computation efficiency of three vital applications: lidar-inertial odometry, multi-lidar calibration, and global mapping. Constrained by the page limit, details about the incorporation of bundle adjustment method in these applications and its effectiveness in real-world experiments are presented in Section I of the supplementary materials \cite{LiuZheng2022supplementary}.

\section{Discussion}

Here we discuss the efficiency, accuracy, and extendability of the proposed bundle adjustment method.

\subsection{Efficiency}

Our method achieved lower computation time than other state-of-the-art counterparts. The efficiency of our method are attributed to three inter-related and rigorously-proved techniques that make fully use of the problem nature and lidar point cloud property. The first technique is the solving of feature parameters in a closed-form before the BA optimization. It allows the feature parameters to be removed from the optimization, which fundamentally reduces the optimization dimension to the dimension of the pose only, a phenomenon that did not exist before in visual bundle adjustment problem. The second technique is a second-order solver which fits the quadratic cost function naturally and leads to fast convergence in the iterative optimization. This is enabled by the analytical derivation of the closed-form Jacobian and Hessian matrices of the cost function. The third technique is the point cluster, which enables the aggregation of all raw points without enumerating each individual point in neither of the cost evaluation, derivatives evaluation, or uncertainty evaluation. Collectively, these three techniques lead to an BA optimization with much lower dimension and time complexity. % of $O(M_f M_p + M_f M_p^2 + M_p^3)$ in each iteration, which achieves the lowest time consumption when compared with other second order methods (e.g., BALM \cite{liu2021balm} and plane adjustment \cite{zhou2020efficient}). 

%Being linear to the feature number and cubic to the pose number, our method seems to reach the theoretical limit of an exact second-order method. To overcome this limit, especially the cubic growth with pose number, in the future, we could explore the connectivity of the poses and divide the BA optimization into multiple smaller problems. 

%exploit the sparsity in the Hessian matrix $\mathbf{H}$, which is completely ignored in this paper, and the incremental nature of the data reception. Since the problem is not in a least square form (the cost function in (\ref{BA-formulation-reduced-reduced}) is not a sum of squares), incremental smoothing and mapping technique such as \cite{kaess2008isam} commonly used in visual SLAM cannot be applied directly and a holistic new derivation is required. Another possible approach is parallelizing the decomposition of the Hessian matrix. Noticing that the overall Hessian matrix is the summation of many smaller (and sparser) Hessian matrices, each corresponds to one cost item, their decomposition could be parallelized and then combined to produce the overall decomposition. We would explore these interesting topics in the future. 

\vspace{-0.4cm}
\subsection{Accuracy}

Benefiting from the point cluster technique, our proposed method is able to exploit the information of all raw point measurements, achieving high pose estimation accuracy (a few centimeters) at the level of lidar measurement noise. Optimization from the raw lidar points also enables the developed method to estimate the uncertainty level of the estimated pose, which may be useful when this information is further fused with measurements from other sensors (e.g., IMU sensors). Moreover,  by minimizing the Euclidean distance from each raw point to the corresponding feature, our method can reinforce the map consistency in a more direct manner than conventional pose graph optimization.  While at a higher computation cost (due to the more complete consideration of features co-visible in multiple scans), it considerably improves the mapping accuracy which is important for mapping applications. Due to this reason, our method is particularly useful for accuracy refinements from a baseline pose trajectory that can be obtained by an odometry or a pose graph optimization module. The second order optimization provides very fast convergence when the solution is near to the optimal value, preventing premature solutions. 

\vspace{-0.4cm}
\subsection{Extendability}

As a basic technique for multiple scan registration, our proposed method can be easily be integrated with other formality of data, such as images and IMU measurements, by incorporating visual bundle adjustment factors and IMU pre-integration factors \cite{forster2016manifold} in the optimization. Moreover, besides the frame-based pose trajectory, which attaches each frame an independent pose to estimate, our method can also work with other forms of pose trajectories, such as continuous-time trajectories based on Splines \cite{bosse2012zebedee, droeschel2018efficient} or Gaussian Process models \cite{tong2013gaussian, le2020in2laama}, which have the capability to compensate the in-frame motion distortion.  According to the chain rules, the derivatives of the BA cost with respect to the trajectory parameters will consist of two parts: the first is the derivative of the BA cost with respect to the pose of each point cluster as derived in this paper, and the second part is the derivatives of the pose with respect to the trajectory parameters, which depends on the specific trajectories being used.

\section{Conclusion}

This paper proposed a novel bundle adjustment method for lidar point cloud. The central of the proposed method is a point cluster technique, which aggregates all raw points into a compact set of parameters without enumerating each individual point. The paper showed how the bundle adjustment problem can be represented by the point cluster and also derived the analytical form of the Jacobian and Hessian matrices based on the point cluster. Based on these derivations, the paper developed a second-order solver, which estimates both the pose and the pose uncertainty. %Besides the optimization solvers, the paper also considered the problem of feature association and proposed an adaptive voxelization and merging technique. 
The developed BA method is open sourced to benefit the community.

\vspace{-0.09cm}
Besides the technical developments, this paper also made some theoretical contributions, including the formalization of the point cluster and its operations, revealing of the invariance property of the formulated BA optimization, the proof of null space and sparsity of the derived Jacobian and Hessan matrices, and the time complexity analysis of the proposed BA method and its comparison with others. These theoretical results serve the foundation of our developed BA techniques. 

\vspace{-0.09cm}
The proposed methods and implementations were extensively verified in both simulation and real-world experiments, in terms of consistency, efficiency, accuracy, and robustness. In all evaluations, the proposed method achieved consistently higher accuracy while consuming significantly lower computation time. This paper further demonstrated three applications of the BA techniques, including lidar-inertial odometry, multi-lidar calibration, and high-accuracy mapping. In all applications, the adoption of BA method could effectively improve the accuracy or the efficiency. 

In the future, we would like to incorporate the BA method more tightly to the above applications and beyond. This would require more thorough considerations of many practical issues, such as point cloud motion compensation, removal of dynamic objects, tightly-fusion with other formality of sensor data (e.g., IMU, camera) and module (e.g., loop closure).  

\bibliography{bare_jrnl}

\clearpage
\pagestyle{empty}
\thispagestyle{empty}
\includepdf[pages=-,pagecommand={}]{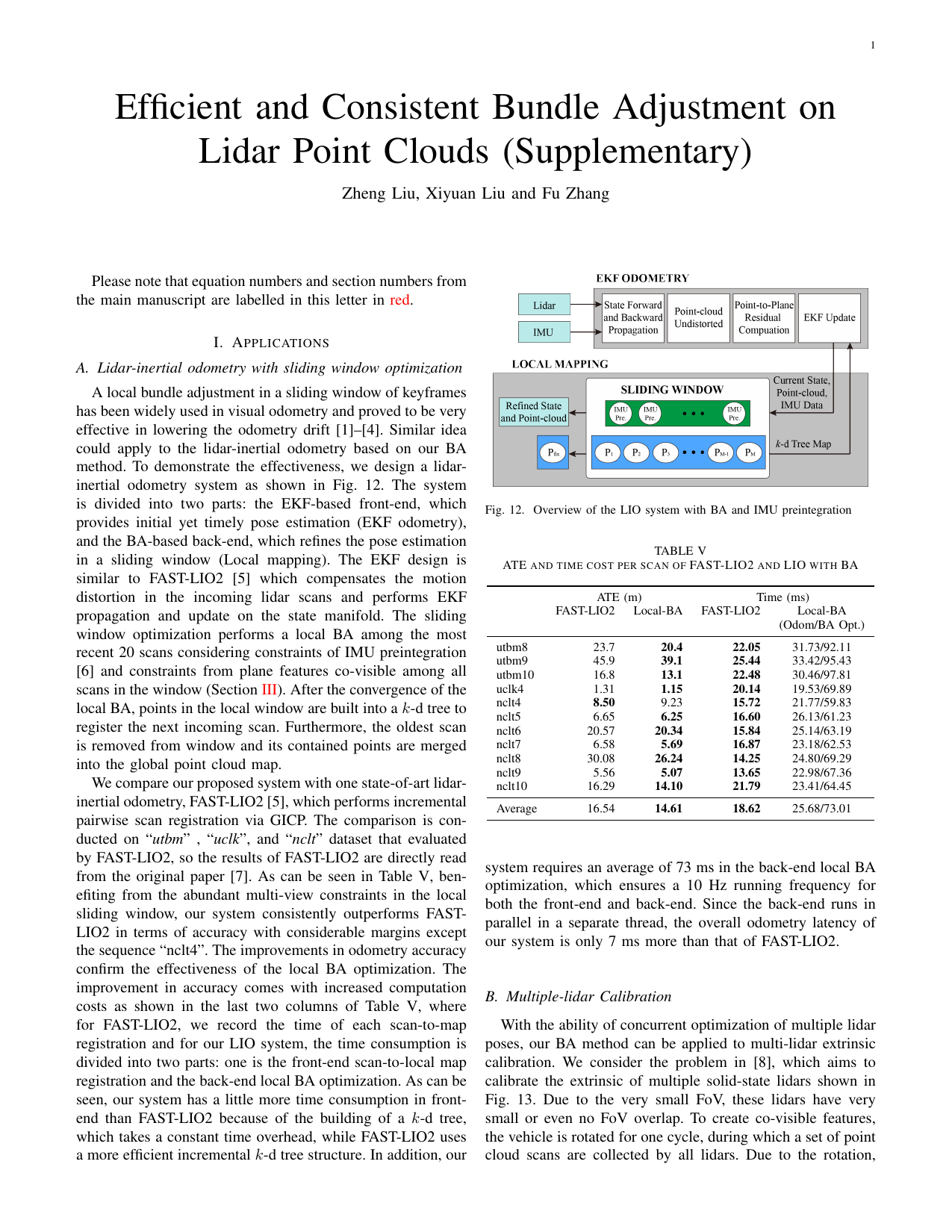}

\ifCLASSOPTIONcaptionsoff
  \newpage
\fi

\end{document}